\newtheorem{definition}{Definition}
\newtheorem{theorem}{Theorem}
\newtheorem{proposition}{Proposition}
\title{The \textsc{bin\_counts} Constraint: \\ Filtering and Applications}
\author[1]{Roberto Rossi\thanks{Corresponding author. Address: 29 Buccleuch place, EH89JS, Edinburgh, UK. Email: roberto.rossi@ed.ac.uk}}
\author[2]{\"Ozg\"ur Akg\"un}
\author[3]{Steven Prestwich}
\author[4]{Armagan Tarim}
\affil[1]{Business School, University of Edinburgh, Edinburgh, UK}
\affil[2]{Department of Computer Science, University of Saint Andrews, UK}
\affil[3]{Insight Centre for Data Analytics, University College Cork, Ireland}
\affil[4]{Department of Management, Cankaya University, Turkey}
\date{}                                           % Activate to display a given date or no date
\begin{document}
\maketitle

\begin{abstract}
We introduce the \textsc{bin\_counts} constraint, which deals with the problem of counting the number of decision variables in a set which are assigned values that lie in given bins. We illustrate a decomposition and a filtering algorithm that achieves generalised arc consistency. We contrast the filtering power of these two approaches and we discuss a number of applications. We show that \textsc{bin\_counts} can be employed to develop a decomposition for the $\chi^2$ test constraint, a new statistical constraint that we introduce in this work. We also show how this new constraint can be employed in the context of the Balanced Academic Curriculum Problem and of the Balanced Nursing Workload Problem. For both these problems we carry out numerical studies involving our reformulations. Finally, we present a further application of the $\chi^2$ test constraint in the context of confidence interval analysis.\\
{\bf Keywords:} \textsc{bin\_counts}; generalised arc consistency; decomposition; statistical constraint; $\chi^2$ test constraint.
\end{abstract}

\newpage

\section{Introduction}
In Constraint Programming (CP) \cite{1207782} counting constraints and value constraints represent two important constraint classes which can be used to model a wide range of practical problems in areas such as scheduling and rostering.

Given a list of numbers, counting the number of elements in the list whose values lie in successive bins of given widths represents a problem that is often faced in spreadsheet modeling and mathematical computation. Tools that deal with this problem are available in most modern spreadsheets and mathematical computation programs. In Excel\textsuperscript{TM}, this modeling tool takes the form of command {\verb FREQUENCY };  in Mathematica\textsuperscript{TM}, of command {\verb BinCounts }. 

In this work we introduce the \textsc{bin\_counts} constraint, which models the aformentioned problem in a declarative fashion. To the best of our knowledge, no comparable constraint exists in state-of-the-art CP solvers.

We first introduce a decomposition for this constraint, as well as a polynomial filtering algorithm that achieves generalized arc consistency. We then present a number of applications for this new constraints. In our first application, we show how \textsc{bin\_counts} can be used to develop a decomposition for a new statistical constraint \cite{rossi2014}: the $\chi^2$ test constraint. In our second and third applications, we demonstrate the advantage of enforcing generalised arc consistency in the context of two standard CP benchmark problems: the balanced academic curriculum problem \cite{DBLP:journals/corr/cs-PL-0110007} and the balanced nursing workload problem \cite{10.2307/822876}. Finally, we present a further application of a variant of the $\chi^2$ test constraint in the context of confidence interval analysis.

This work is structured as follows. In Section \ref{sec:formal_background} we provide relevant notions on CP. In Section \ref{sec:bincount} we introduce the \textsc{bin\_counts} constraint. In Section \ref{sec:decomposition} we discuss a decomposition for this constraint and in Section \ref{sec:gac} we introduce a polynomial filtering algorithm that achieves generalised arc consistency. In Section \ref{sec:computational_study} we contrast performance of these two approaches. In Section \ref{sec:applications} we discuss applications of the constraint. In Section \ref{sec:related_works} we survey related works in the literature and in Section \ref{sec:conclusions} we draw conclusions.

\section{Formal background}\label{sec:formal_background}

A Constraint Satisfaction Problem (CSP) is a triple $\langle V,C,D \rangle$, where $V$
is a set of decision variables, $D$ is a function mapping each element of $V$ to a domain of
potential values, and $C$ is a set of constraints stating allowed combinations of values for subsets of variables in $V$ \cite{1207782}. A {\em solution} to a CSP is an assignment of variables to values in their respective domains such that all of the constraints are satisfied. 

There are different kinds of constraints used in CP: e.g. logic constraints, linear constraints, and {\em global constraints} \cite{reg03}. A global constraint captures a relation among a non-fixed number of variables. 
Constraints typically embed dedicated {\em filtering algorithms} able to remove provably infeasible
or suboptimal values from the domains of the decision variables that are constrained and, therefore, to enforce some degree of {\em consistency}, e.g. arc consistency (AC) \cite{citeulike:6598311}, bounds consistency (BC) \cite{Choi06} or generalised arc consistency (GAC) \cite{Freuder:1982:SCB:322290.322292}. A constraint is \emph{generalized arc-consistent} if and only if, when a variable is assigned any of the values in its domain, there exist compatible values in the domains of all the other variables in the constraint. A classical example of an arc consistency algorithm is the one presented in \cite{REGIN:1994:FAC:2891730.2891786} for the \textsc{all\_different} constraint, which constrains a given number of decision variables to take values that are all differents.
Filtering algorithms are repeatedly called until no more values are pruned; this process is called {\em constraint propagation}.

In addition to constraints and filtering algorithms, constraint solvers also feature a heuristic {\em search engine}, e.g. a backtracking algorithm guided by dedicated variable and value selection heuristics. During search, the constraint solver explores partial assignments and exploits filtering algorithms in order to proactively prune parts of the search space that cannot lead to a feasible or to an optimal solution.

\section{The {\normalfont\scshape bin\_counts} constraint}\label{sec:bincount}

Consider a list of $n$ integer values and a set of $m$ bins covering intervals $[b_j,b_{j+1})$, $j=1,\ldots,m$, our aim is to count, for each bin, the number of elements in the list whose values lie in it. 

{\bf Example.} Consider the following list of $n=10$ values $\{1, 1, 5, 3, 1, 2, 1, 1, 3, 1\}$ and a set of $m=3$ bins covering intervals $[1,3)$, $[3,4)$, $[4,6)$. By using command {\verb BinCounts } in Mathematica\textsuperscript{TM} we obtain the following counts for the 3 bins considered: $\{7,2,1\}$.

Without loss of generality and for ease of exposition, in what follows, we will keep referring to the case in which values and bin sizes are integer. However, it should be noted that the \textsc{bin\_counts} constraint can be extended to the case in which both values and bin sizes are real values; the generalized arc consistency propagator in Section \ref{sec:gac} seamlessly applies to this case.

Now, let $b_1,\ldots,b_{m+1}$ be scalar values, $x_i$ for $i=1,\ldots,n$ be decision variables with domain $\text{Dom}(x_i)$ and $c_j$ for $j=1,\ldots,m$ be decision variables with domain $\text{Dom}(c_j)$.

\begin{definition}
$\textsc{bin\_counts}_{b_1,\ldots,b_{m+1}}(x_1,\ldots,x_n;c_1,\ldots,c_m)$ holds iff $c_j$ is equal to the count of values assigned to $x_1,\ldots,x_n$ which lie within interval $[b_j,b_{j+1})$.
\end{definition} 

\section{A decomposition strategy}\label{sec:decomposition}

The $\textsc{global\_cardinality}_{v_1,\ldots,v_m}$($x_1,\ldots,x_n;c_1,\ldots,c_m$) constraint \cite{charme89} requires that, for each $j=1,\ldots,m$, decision variable $c_j$ is equal to the number of variables $x_1,\ldots,x_n$ that are assigned scalar $v_j$. 

We decompose \textsc{bin\_counts} by means of $b_{m+1}-b_1$ auxiliary variables, a \textsc{global\_cardinality} constraint, and a set of linear equalities as shown in Fig. \ref{model:bincounts_decomposition}. Essentially, we count occurrences $o_k$ of individual values $k\in\bigcup_{j=1}^m [b_j,b_{j+1})$ that appear in any of the bins (constraint 1). Then we sum all occurrences that belong to a given bin (constraint 2). Finally, we make sure that bin counts $c_j$ sum to $n$ (constraint 3). Note that this latter constraint can be formulated as an inequality ($\leq$) if we want to allow the $x_i$ to take values that fall outside the range of values covered by bins.

\begin{figure}[h!]
\begin{center}
        \framebox{
        \begin{tabular}{ll}
        \mbox{\bf Constraints:}\\
        \multicolumn{2}{l}{
        ~~~(1)~~$\textsc{global\_cardinality}_{b_1,\ldots,b_{m+1}}(x;o)$}\\
        ~~~~(2)~~$\sum^{b_{j+1}-1}_{k=b_j}o_k=c_j$						&$j=1,\ldots,m$\\
        ~~~~(3)~~$\sum_{j=1}^m c_j=n$\\
        \\     
        \mbox{\bf Parameters:} \\
        ~~~~$b_1,\ldots,b_{m+1}$ 			&bin boundaries\\
        ~~~~$n$							&number of value variables\\                        							
        ~~~~$m$							&number of bins\\                        
        \\
        \mbox{\bf Decision variables:} \\
        ~~~~$x_i$							&value variables\\        
        ~~~~$c_j$ 						&bin counts\\ 
        ~~~~$o_k$						&occurrences of value $k$\\ 
        \end{tabular}
        }
    \caption{\textsc{bin\_counts} decomposition}
    \label{model:bincounts_decomposition}
\end{center}    
\end{figure}

{\bf Example.} Consider the following numerical example with $n=3$ variable $x_i$, such that $\text{Dom}(x_1)=\{3,4\}$, $\text{Dom}(x_2)=\{1,2,4\}$, and $\text{Dom}(x_3)=\{2,3,4\}$; and $m=2$ bins ($b_1=1$, $b_2=3$, and $b_3=5$) where $\text{Dom}(c_1)=\{1,2,3\}$ and $\text{Dom}(c_2)=\{0,1\}$. We apply constraint propagation to the above decomposition by enforcing GAC on each constraint until a fixed point in reached. After filtering, domains are reduced as follows: $\text{Dom}(x_1)=\{3,4\}$, $\text{Dom}(x_2)=\{1,2,4\}$, $\text{Dom}(x_3)=\{2,3,4\}$, $\text{Dom}(c_1)=\{2,3\}$, and $\text{Dom}(c_2)=\{0,1\}$.

As we will show in the next section, these domains can be further reduced, therefore this decomposition does not achieve GAC. We shall next introduce GAC filtering for the \textsc{bin\_counts}.

\section{Enforcing generalized arc consistency}\label{sec:gac}

In this section we first illustrate theoretical properties and then present our filtering strategy that achieves GAC.

\subsection{Theoretical properties}

We reformulate \textsc{bin\_counts} as a bipartite network flow problem \citep{citeulike:14221207}. We generate a bipartite graph with one set of vertexes $v^x_1,\ldots,v^x_n$ and another set of vertexes $v^c_1,\ldots,v^c_m$. An arc $(v^x_i,v^c_j)$ exists between node $v^x_i$ and node $v^c_j$ if and only if there exists at least a value in the domain of $x_i$ which falls in bin $[b_j,b_{j+1})$. We label arc $(v^x_i,v^c_j)$ with the set of relevant values from $\text{Dom}(x_i)$ that fall within bin $[b_j,b_{j+1})$; note that these labels do not reflects arc capacities, which are all set to 1 for arcs $(v^x_i,v^c_j)$. We add a source node $s$ linked to nodes $v^x_1,\ldots,v^x_n$ and a terminal node $t$ linked to nodes $v^c_1,\ldots,v^c_m$; arc flows from $s$ to $x_i$ are set to 1; flow $c_j$ passing through arc $(v^c_j,t)$ must take a value in $\text{Dom}(c_j)$. We shall initially assume that all $\text{Dom}(c_j)$ are compact; this assumption will be relaxed at the end of this section.

\begin{figure}
\center
\begin{tikzpicture}

% the source
\begin{scope}[xshift=-2cm,yshift=-1.05cm,start chain=going below, node distance=7mm]
\node[draw,circle] (s) [label=left: $s$] {};
\end{scope}

% the vertices of U
\begin{scope}[start chain=going below,node distance=7mm]
\foreach \i in {1,2,...,3}
  \node[on chain,draw,circle] (x\i) [label=above: $v^x_\i$] {};
\end{scope}

% the vertices of V
\begin{scope}[xshift=4cm,yshift=-0.5cm,start chain=going below,node distance=7mm]
\foreach \i in {1,2}
  \node[on chain,draw,circle] (c\i) [label=above: $v^c_\i$] {};
\end{scope}

% the terminal
\begin{scope}[xshift=8cm,yshift=-1.05cm,start chain=going below, node distance=7mm]
\node[draw,circle] (t) [label=right: $t$] {};
\end{scope}

% the edges
\draw (s) -- (x1) node[midway] {\bf 1};
\draw (s) -- (x2) node[midway] {\bf 1};
\draw (s) -- (x3) node[midway] {\bf 1};

\draw (x1) -- (c2) node[near start] {$\{3,4\}$};
\draw (x2) -- (c2) node[near end] {$\{4\}$};
\draw (x3) -- (c2) node[midway] {$\{3,4\}$};

\draw (x2) -- (c1) node[near end] {$\{1,2\}$};
\draw (x3) -- (c1) node[near start] {$\{2\}$};

\draw (c1) -- (t) node[midway] {\bf 1,2,3};
\draw (c2) -- (t) node[midway] {\bf 0,1};
\end{tikzpicture}
\caption{Bipartite network flow graph for the numerical example; arc $(v^x_i,v^c_j)$ labels are shown in curly brackets, arc flows are shown in bold; we omit arc $(v^x_i,v^c_j)$ capacities, which are all set to 1}
\label{fig:bipartite}
\end{figure}

{\bf Example.} Consider once more the example introduced in Section \ref{sec:decomposition}.
%Consider the following numerical example with $n=3$ values $x_i$ such that $\text{Dom}(x_1)=\{3,4\}$, $\text{Dom}(x_2)=\{1,2,4\}$, and $\text{Dom}(x_3)=\{2,3,4\}$; and $m=2$ bins ($b_1=1$, $b_2=3$, and $b_3=5$) where $\text{Dom}(c_1)=\{1,2,3\}$ and $\text{Dom}(c_2)=\{0,1\}$. 
The associated bipartite graph is shown in Fig. \ref{fig:bipartite}.

The feasible region of \textsc{bin\_counts} can be expressed as a system of linear equations. In the aforementioned graph theoretical construct let $f_{ij}$ denote the flow from $v^x_i$ to $v^c_j$. The linear inequalities that define our problem are
%\begin{eqnarray}
%&\sum_{i=1}^n f_{ij} 	= 	c_j			&j=1,\ldots,m	\label{eq:1}\\
%&\sum_{j=1}^m f_{ij} 	= 	1			&i=1,\ldots,n	\label{eq:2}\\
%&\sum_{j=1}^m c_j 	= 	n						\label{eq:3}\\
%&c_j				\leq 	\overline{c}_j	&j=1,\ldots,m	\label{eq:4}\\
%&c_j				\geq 	\underline{c}_j	&j=1,\ldots,m	\label{eq:5}
%\end{eqnarray}
\begin{eqnarray}
&\sum_{j=1}^m f_{ij} 		= 1				&i=1,\ldots,n	\label{eq:1}\\
&\sum_{i=1}^n f_{ij} 	\leq \overline{c}_j		&j=1,\ldots,m	\label{eq:2}\\
&\sum_{i=1}^n f_{ij}	\geq \underline{c}_j		&j=1,\ldots,m	\label{eq:3}
\end{eqnarray}
where $\overline{c}_j$ and $\underline{c}_j$ represent the upper and the lower bounds of the domain of $c_j$, respectively. 

Note that the fact that a decision variable $x_i$ is assigned a value $v$, where $v$ falls within bin $[b_j,b_{j+1})$, can be expressed by adding a constraint $f_{ij}=1$. Similarly, if during search constraint propagation on $x_i$ filters all values listed in the label of arc $(v^x_i,v^c_j)$,\footnote{The removal of all values in the label of arc $(v^x_i,v^c_j)$ does not necessarily lead to a wipeout of $\text{Dom}(x_i)$ since some values of the domain may appear in the label of arcs $(i,k)$, where $k\neq j$.} this can be expressed by adding a constraint $f_{ij}=0$. As we will discuss later, these observations can be exploited to develop incremental propagators.

\begin{theorem}\label{thm:integrality}
The feasible region associated with the system of linear equations stemming from the definition of \textsc{bin\_counts} is an integral polyhedron.
\end{theorem}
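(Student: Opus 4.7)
The plan is to show that the constraint matrix of the system (1)--(3) is totally unimodular (TU); combined with integer right-hand sides this yields an integral polyhedron by the Hoffman-Kruskal theorem.

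First, I would make the bin-count variables $c_j$ explicit by identifying them with the flow on the arc $(v^c_j,t)$. Then constraints (2)--(3) collapse into the single flow-conservation equality $\sum_{i=1}^n f_{ij} - c_j = 0$ at node $v^c_j$, supplemented by the box constraints $\underline{c}_j \leq c_j \leq \overline{c}_j$. Together with (1), and with the implicit unit flow on each $(s,v^x_i)$ arc (already encoded by (1) as flow conservation at $v^x_i$), this is precisely the feasibility polytope of an $s$-$t$ flow on the bipartite network of Fig.~\ref{fig:bipartite}.

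Second, I would verify that the constraint matrix $A$ over the variables $(f_{ij},c_j)$ is TU. In $A$, each column $f_{ij}$ has exactly one $+1$ in the row of equation (1)$_i$ and one $+1$ in the (merged) row of equation (2)$_j$, while each column $c_j$ has a single $-1$ in row (2)$_j$. Applying the Heller-Tompkins criterion with the partition $R_1=\{\text{rows }(1)_i\}_i$ and $R_2=\{\text{rows }(2)_j\}_j$, every column with two equally-signed nonzero entries has them in distinct blocks, and singleton columns are trivially fine; hence $A$ is TU. Equivalently, one may simply invoke the classical result that the node-arc incidence matrix of a directed graph is totally unimodular.

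Third, the box bounds $\underline{c}_j\leq c_j\leq \overline{c}_j$ preserve TU of the enlarged matrix (they are simple rows of identity-type), and under the integrality assumption on $\mathrm{Dom}(c_j)$ the right-hand-side vector is integer. Hoffman-Kruskal then yields that every vertex of the feasible region is integral, proving the claim. The main obstacle I anticipate is cleanly handling the simultaneous upper bound (2) and lower bound (3): keeping them as two independent inequalities on the same sum would give $f_{ij}$-columns with three $+1$ entries, falling outside the simplest TU criteria. Promoting $c_j$ to an explicit flow variable and merging (2)--(3) into one flow-conservation equality plus side box bounds is the key reduction that recasts the system as a standard bipartite flow feasibility problem, after which TU is immediate.
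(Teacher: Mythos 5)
Your proof is correct, but it reaches total unimodularity by a different reduction than the paper. The paper keeps the two-sided count bounds as separate constraints \eqref{eq:2}--\eqref{eq:3}, converts them to standard form with slack and excess variables, and then invokes closure properties of TUM matrices (under deleting columns with at most one nonzero entry and under repeating rows) to strip the matrix down to one in which each $f_{ij}$ column has exactly two $+1$ entries, one per block of the Heller--Tompkins partition; the key observation is that after removing the slack/excess columns the rows of \eqref{eq:5} and \eqref{eq:6} coincide and can be deduplicated. You instead lift $c_j$ to an explicit flow variable on the arc $(v^c_j,t)$, merge the two inequalities into a single flow-conservation equality plus box bounds, and obtain (essentially) a node-arc incidence matrix, for which TU is immediate; you correctly identify that the naive three-$(+1)$s-per-column matrix is exactly the obstacle both arguments must circumvent. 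Your route is the more standard network-flow formulation and has the side benefit of certifying integrality of the $c_j$ values directly, at the cost of a (trivial, since $c_j=\sum_i f_{ij}$ is an affine function of $f$) projection step back to the $f$-space polyhedron the theorem is actually stated about; the paper's route stays in the original variables throughout. One point worth making explicit in your write-up: appending the box-bound rows on $c_j$ preserves TU for the same closure reason the paper cites for single-nonzero columns, or can be absorbed directly into the Hoffman--Kruskal statement for polyhedra of the form $\{x: a\leq Ax\leq b,\ l\leq x\leq u\}$.
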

\begin{proof}
%Follows from the integral flow theorem \cite{citeulike:507409}, according to which since all capacities are integer there is a max flow (respectively, min cost flow) which is integral; this also implies that feasible region associated with the system of linear equations is an integral polyhedron.
We show that the constraint matrix is totally unimodular (TUM); this implies that feasible region associated with the system of linear equations is an integral polyhedron \cite{citeulike:507409}.
In \cite{Heller:Tompkins:1956} the authors proved that, if $A$ is a matrix whose rows can be partitioned into two disjoint sets $B$ and $C$, the following four conditions together are sufficient for A to be totally unimodular:
\begin{enumerate}
\item every column of $A$ contains at most two non-zero entries;
\item every entry in $A$ is $0$, $1$, or $-1$;
\item if two non-zero entries in a column of $A$ have the same sign, then the row of one is in $B$, and the other in $C$;
\item if two non-zero entries in a column of $A$ have opposite signs, then the rows of both are in $B$, or both in $C$.
\end{enumerate}
%By observing that $\sum_{i=1}^n f_{ij} = c_j$, 
We reformulate the above set of linear inequalities in standard form as follows
\begin{eqnarray}
&\sum_{j=1}^m f_{ij} 		= 1				&i=1,\ldots,n	\label{eq:4}\\
&\sum_{i=1}^n f_{ij}+s_j 	= \overline{c}_j		&j=1,\ldots,m	\label{eq:5}\\
&\sum_{i=1}^n f_{ij}-e_j	= \underline{c}_j	&j=1,\ldots,m	\label{eq:6}
\end{eqnarray}
where $s_j$ and $e_j$ are nonnegative.
%Note that constraint \ref{eq:3} is redundant because of \ref{eq:2}. 
%https://kunigami.wordpress.com/2013/08/12/tu-matrix-recognition/
%https://en.wikipedia.org/wiki/Unimodular_matrix#cite_note-2
We then consider the coefficient matrix $A$ of this system of linear equations and show that it satisfies the four conditions above. It is possible to show that TUM matrices are closed under the following operations: adding a row or column with at most one non-zero entry, and repeating a row or a column \cite[][lemma 9.2.2]{citeulike:14224552}. We therefore remove from $A$ all columns with one or less non-zero entries (i.e. columns corresponding to slack variables $s_j$ and excess variables $e_j$), as well as repeated rows and columns --- after removing matrix columns associated with slack/excess variables, matrix rows corresponding to constraints \ref{eq:5} and \ref{eq:6} become identical. The resulting constraint matrix is shown in Fig. \ref{tab:constraint_matrix}. If we partition rows as shown in the figure, this will satisfy the four conditions.
\end{proof}

\newcommand\MyLBrace[2]{%
  \left.\rule{0pt}{#1}\right\}\text{#2}}

\begin{figure}
\centering
\resizebox{0.7\textwidth}{!}{%
\begin{tabular}{c@{}l}
\begin{tabular}{lllllllllllll}
$f_{11}$	&$f_{12}$	&\ldots	&$f_{1m}$	&$f_{21}$	&$f_{22}$	&\ldots	&$f_{2m}$&\ldots	&$f_{n1}$		&$f_{n2}$		&\ldots	&$f_{nm}$	\\
\hline
1		&1		&\ldots	&1		&0		&0		&\ldots	&0		&\ldots	&0			&0			&\ldots	&0		\\
0		&0		&\ldots	&0		&1		&1		&\ldots	&1		&\ldots	&0			&0			&\ldots	&0		\\
\vdots	&\vdots	&$\ddots$	&\vdots	&\vdots	&\vdots	&$\ddots$	&\vdots	&$\ddots$	&\vdots		&\vdots		&$\ddots$	&\vdots	\\
0		&0		&\ldots	&0		&0		&0		&\ldots	&0		&\ldots	&1			&1			&\ldots	&1		\\
\hdashline
1		&0		&\ldots	&0		&1		&0		&\ldots	&0		&\ldots	&1			&0			&\ldots	&0		\\
0		&1		&\ldots	&0		&0		&1		&\ldots	&0		&\ldots	&0			&1			&\ldots	&0		\\
\vdots	&\vdots	&$\ddots$	&\vdots	&\vdots	&\vdots	&$\ddots$	&\vdots	&$\ddots$	&0			&\vdots		&$\ddots$	&\vdots	\\
0		&0		&\ldots	&1		&0		&0		&\ldots	&1		&\ldots	&0			&0			&\ldots	&1		
\end{tabular}
&
$\begin{array}{l}
\\
    \MyLBrace{7ex}{$B$} \\ 
    \MyLBrace{7ex}{$C$} 
  \end{array}$
\end{tabular}
}
\caption{TUM constraint matrix $A$}
\label{tab:constraint_matrix}
\end{figure}

\begin{proposition}\label{thm:integraligy}
\textsc{bin\_counts} admits a feasible assignment iff the above system of linear equations admits a solution.
\end{proposition}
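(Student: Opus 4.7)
The plan is to prove both directions of the biconditional by converting between feasible assignments of $\textsc{bin\_counts}$ and integral feasible solutions of the system \eqref{eq:1}--\eqref{eq:3}. Throughout I keep the standing assumption that each $\text{Dom}(c_j)$ is compact (i.e., an interval $[\underline{c}_j,\overline{c}_j]$) so that $c_j\in\text{Dom}(c_j)$ is equivalent to $\underline{c}_j\le c_j\le \overline{c}_j$, and I treat $f_{ij}$ as defined only when the arc $(v^x_i,v^c_j)$ exists in the bipartite graph of Figure~\ref{fig:bipartite}, with the implicit $0\le f_{ij}\le 1$ bound (equivalently, $f_{ij}=0$ whenever no value of $\text{Dom}(x_i)$ lies in $[b_j,b_{j+1})$).

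For the ``only if'' direction, suppose $(x_1,\ldots,x_n,c_1,\ldots,c_m)$ is a feasible assignment. Since the bins $[b_j,b_{j+1})$ are pairwise disjoint, each value $x_i$ lies in exactly one bin; let $j(i)$ denote that bin and set $f_{i,j(i)}=1$ and $f_{ij}=0$ for $j\neq j(i)$. This is a legal flow because $x_i\in\text{Dom}(x_i)\cap[b_{j(i)},b_{j(i)+1})$, so the arc $(v^x_i,v^c_{j(i)})$ exists. Constraint \eqref{eq:1} is then immediate from $\sum_j f_{ij}=1$, and $\sum_i f_{ij}$ equals the bin-$j$ count, which by feasibility equals $c_j\in[\underline{c}_j,\overline{c}_j]$, so \eqref{eq:2} and \eqref{eq:3} hold.

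For the ``if'' direction, suppose \eqref{eq:1}--\eqref{eq:3} admits a (possibly fractional) solution. By Theorem~\ref{thm:integrality} the feasible polyhedron is integral, so the system admits an integer solution $f^*$; together with $0\le f^*_{ij}\le 1$ and \eqref{eq:1} this forces $f^*_{ij}\in\{0,1\}$ and, for each $i$, a unique $j(i)$ with $f^*_{i,j(i)}=1$. Because $f^*_{i,j(i)}$ can be nonzero only along an existing arc, the set $\text{Dom}(x_i)\cap [b_{j(i)},b_{j(i)+1})$ is nonempty; pick any $x_i$ from it. Finally, set $c_j=\sum_i f^*_{ij}$; by \eqref{eq:2}--\eqref{eq:3} this integer lies in $[\underline{c}_j,\overline{c}_j]=\text{Dom}(c_j)$, and by construction it equals the number of selected $x_i$'s that land in bin $j$, so $\textsc{bin\_counts}$ is satisfied.

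The only delicate step is the appeal to integrality in the reverse direction: without Theorem~\ref{thm:integrality} the LP could be feasible with only fractional solutions, and a fractional $f_{ij}$ cannot be interpreted as an assignment of a single value to $x_i$. Everything else is a direct verification. A minor caveat worth stating explicitly in the proof is the compactness assumption on $\text{Dom}(c_j)$, which the paper flags at the end of the theoretical-properties discussion and promises to relax; in the non-compact case the backward direction can fail, since $\sum_i f^*_{ij}$ may fall in a hole of $\text{Dom}(c_j)$.
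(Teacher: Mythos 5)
Your proof is correct and follows exactly the route the paper intends: the paper's own proof is the one-line remark ``follows by construction and from Theorem \ref{thm:integrality}'', and your two directions (the explicit flow construction for the forward implication, and the appeal to integrality of the polyhedron to recover a $0$--$1$ flow and hence an assignment for the reverse) are precisely the details that remark compresses. Your closing caveats about the compactness of $\text{Dom}(c_j)$ and the necessity of Theorem \ref{thm:integrality} for the backward direction are also consistent with the paper's own discussion.
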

\begin{proof}
follows by construction and from Theorem \ref{thm:integrality}.
\end{proof}

\begin{proposition}\label{thm:bc_gac}
BC and GAC are equivalent for \textsc{bin\_counts}.
\end{proposition}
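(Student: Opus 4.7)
The plan is to lean on Theorem \ref{thm:integrality} (TUM integrality) and the convexity of the LP polyhedron from Section \ref{sec:gac}. The core claim to establish is: for each decision variable in the constraint, the set of values that appear in some feasible assignment forms a contiguous integer interval. Once this property is in hand, BC (which keeps only the extreme supported values and everything between them) automatically removes exactly the same values as GAC (which inspects each value individually), and the equivalence follows.

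First I would handle a count variable $c_j$. Suppose two integers $k_1<k_2$ are both GAC-consistent, so by Proposition \ref{thm:integraligy} the LP \eqref{eq:1}--\eqref{eq:3} is feasible with $c_j$ fixed to $k_1$ and with $c_j$ fixed to $k_2$. For any integer $k$ with $k_1\le k\le k_2$, the convex combination of the two witnessing flows (with weights $(k_2-k)/(k_2-k_1)$ and $(k-k_1)/(k_2-k_1)$) is a fractional feasible solution in which $\sum_i f_{ij}=k$; tightening $\underline{c}_j$ and $\overline{c}_j$ to $k$ merely modifies the right-hand sides of \eqref{eq:2}--\eqref{eq:3}, leaving the constraint matrix of Fig.~\ref{tab:constraint_matrix} unchanged. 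Theorem \ref{thm:integrality} then gives an integer feasible solution, and Proposition \ref{thm:integraligy} turns it into a feasible assignment of \textsc{bin\_counts}. Hence the set of GAC-consistent values of $c_j$ is an integer interval $[\ell_j,u_j]$, and replacing $\text{Dom}(c_j)$ by $[\ell_j,u_j]\cap\text{Dom}(c_j)$, i.e.\ performing BC, coincides with GAC.

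Next I would address the value variables $x_i$. A value $v\in\text{Dom}(x_i)$ with $v\in[b_j,b_{j+1})$ is GAC-consistent iff the LP is feasible with $f_{ij}=1$; crucially, the specific $v$ within the bin does not enter the LP, so every value of $\text{Dom}(x_i)$ falling in the same bin shares the same consistency status. Filtering of $\text{Dom}(x_i)$ therefore reduces to asking, for each bin $j$, whether the LP admits a solution with $f_{ij}=1$. Since $f_{ij}\in\{0,1\}$ in the integral polyhedron, BC and GAC on each $f_{ij}$ are trivially the same, and hence the induced filtering on $\text{Dom}(x_i)$ is identical under both consistency levels.

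The main obstacle I anticipate is making precise what BC should mean for the $x_i$ variables, whose domains are arbitrary sets of values rather than integer intervals of supports. My resolution is to route all reasoning through the LP variables $c_j$ and $f_{ij}$---where both BC and GAC are standard---and then transfer the conclusion back to $\text{Dom}(x_i)$ via the bin-membership correspondence. The real work is done by the contiguous-interval property, which is itself a direct consequence of convexity of the LP feasible region combined with Theorem \ref{thm:integrality}; once that property is stated, the equivalence BC$=$GAC becomes essentially automatic.
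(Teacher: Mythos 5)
Your proof is correct and follows essentially the same route as the paper, whose own proof is the one-line observation that the feasible region is a convex integral polyhedron (Theorem \ref{thm:integrality}); you simply supply the details that the paper leaves implicit, namely the convex-combination-plus-total-unimodularity argument showing that the supported values of each $c_j$ form a contiguous integer interval, and the observation that all values of $\text{Dom}(x_i)$ lying in the same bin share the same support status so that consistency reduces to the $0/1$ flow variables $f_{ij}$. Your explicit caveat about what bounds consistency should mean for the set-valued domains of the $x_i$, and your resolution of routing the argument through the $c_j$ and $f_{ij}$ variables, is a genuine point of care that the paper glosses over.
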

\begin{proof}
the feasible region is a convex integral polyhedron (Theorem \ref{thm:integrality}). 
\end{proof}

More intuitively, holes in $\text{Dom}(c_j)$ do not lead to filtering since each arc $(v^x_i,v^c_j)$ contributes at most one unit of flow; for this reason, we can now relax the assumption that $\text{Dom}(c_j)$ is compact. Domain contraction on $x_i$ leads to filtering only when $x_i$ is instantiated to a value (i.e. $f_{ij}=1$) or when we observe a label wipeout on arc $(v^x_i,v^c_j)$ (i.e. $f_{ij}=0$).

\subsection{Filtering $f_{ij}$ and $c_j$ variables}

GAC on $c_j$ and $x_i$ variable can be enforced by solving $2(n+nm)$ linear programs based upon the above system of linear equations; solving linear programs is in P \cite{citeulike:14183483}.

For all $j=1,\ldots,n$  a lower bound (resp. upper bound) for $\text{Dom}(c_j)$ can be found by solving
\begin{eqnarray}
\text{min (resp. max)}&c_j\\
\text{subject to}\nonumber\\
\text{Eq.}&(\ref{eq:1}),\ldots,(\ref{eq:3})\nonumber
\end{eqnarray}

For all $i=1,\ldots,m$ and $j=1,\ldots,n$ a lower bound (upper bound) on $f_{ij}$ can be found by solving
\begin{eqnarray}
\text{min (resp. max)}&f_{ij}\\
\text{subject to}\nonumber\\
\text{Eq.}&(\ref{eq:1}),\ldots,(\ref{eq:3})\nonumber
\end{eqnarray}
If $f_{ij}=0$, all values in arc $(i,j)$ label should be removed from $\text{Dom}(x_i)$; conversely, if $f_{ij}=1$, all values that do not appear in arc $(i,j)$ label should be removed from $\text{Dom}(x_i)$.

The GAC propagator pseudocode is provided in Algorithm \ref{algo:propagator}. The algorithm takes as input the set $X \equiv \{x_1,\ldots,x_n\}$ of value decision variables, the set $C \equiv \{c_1,\ldots,c_m\}$ of bin counts decision variables, and the set of bin boundaries $B \equiv \{b_1,\ldots,b_{m+1}\}$.
The first loop (line \ref{alg:for_cj}) tightens upper and lower bounds for the variables in $C$.
The second loop (line \ref{alg:for_xij}) tightens upper and lower bounds for the variables in $X$. Both loops use the filtering logic introduced in the previous paragraphs of this section.

\begin{algorithm}
\setstretch{1}
\LinesNumbered
\DontPrintSemicolon % Some LaTeX compilers require you to use \dontprintsemicolon instead 
\KwIn{Sets of value $X \equiv \{x_1,\ldots,x_n\}$ and bin counts $C \equiv \{c_1,\ldots,c_m\}$ decision variables; set of bin boundaries $B \equiv \{b_1,\ldots,b_{m+1}\}$}
\KwOut{Filtered domains for decision variables in $X$ and $C$}
\For{$j \gets 1$ \textbf{to} $m$}{\label{alg:for_cj}
  	$c^{\text{lb}}_j \gets \min c_j$ subject to Eq. (\ref{eq:1}),\ldots,(\ref{eq:3})\;
  	$c^{\text{ub}}_j \gets \max c_j$ subject to Eq. (\ref{eq:1}),\ldots,(\ref{eq:3})\;
  	\eIf{the linear program is infeasible} {
		$\text{Dom}(c_j) \gets \emptyset$\;
	}{
    		$\text{inf}(\text{Dom}(c_j)) \gets c^{\text{lb}}_j$\;
		$\text{sup}(\text{Dom}(c_j)) \gets c^{\text{ub}}_j$\;
  	}
}

\For{$i \gets 1$ \textbf{to} $n$ \textbf{and} $j \gets 1$ \textbf{to} $m$}{\label{alg:for_xij}
	$f^{\text{lb}}_{ij} \gets \min f_{ij}$ subject to Eq. (\ref{eq:1}),\ldots,(\ref{eq:3})\;
  	$f^{\text{ub}}_{ij} \gets \max f_{ij}$ subject to Eq. (\ref{eq:1}),\ldots,(\ref{eq:3})\;
	\uIf{the linear program is infeasible} {
		$\text{Dom}(f_{ij}) \gets \emptyset$\;
	}
	\uElseIf{$f^{\text{lb}}_{ij}=1$} {
		\For{$l \gets 1$ \textbf{to} $m$, $l\neq j$}{
			$\text{Dom}(x_i)= \text{Dom}(x_i) \setminus [b_l,b_{l+1})$\;
		}
	}
	\ElseIf{$f^{\text{ub}}_{ij}=0$} {
		$\text{Dom}(x_i)= \text{Dom}(x_i) \setminus [b_j,b_{j+1})$\;	
	}
}
\caption{\textsc{bincounts} GAC propagator}
\label{algo:propagator}
\end{algorithm}

A pseudocode for incremental propagation triggered by a change in the domain of a variable $v\in X\cup C$ is shown in Algorithm \ref{algo:incremental_propagator}. Incremental propagation requires a set of stored booleans $g_{ij}$, for $i=1,\ldots,n$ and $j=1,\ldots,m$; in CP solvers, stored booleans are boolean variables whose state is tracked during search and restored by backtracking. At the beginning of search, after Algorithm \ref{algo:propagator} has been applied to enforce GAC, each $g_{ij}$ takes value {\em true} if and only if $\text{Dom}(x_i)$ contains at least a value that fits in $[b_j,b_{j+1})$. Algorithm \ref{algo:incremental_propagator} takes then as input the set $X \equiv \{x_1,\ldots,x_n\}$ of value decision variables, the set $C \equiv \{c_1,\ldots,c_m\}$ of bin counts decision variables, the set of bin boundaries $B \equiv \{b_1,\ldots,b_{m+1}\}$, and the stored booleans $g_{ij}$. The propagation logic is divided into two procedures. Let $v$ be the decision variable for which a change in the domain triggered propagation. Procedure \texttt{updateDomainsBin} does not present any substantial difference from the first loop of Algorithm \ref{algo:propagator}. Conversely, procedure \texttt{updateDomainsValue} presents a number of differences from the second loop of Algorithm \ref{algo:propagator}. More specifically, if $v$ belongs to $X$ (line \ref{alg:if_v_is_value}) and $i$ is the index of $v$ in $X$, we record in boolean variable $\hat{g}_{j}$ the current value of stored boolean $g_{ij}$ for all bins $j=1,\ldots,m$. Then, we iterate for all combinations of $i$ and $j$ (line \ref{alg:main_value_loop}). For a given combination of $i$ and $j$, if $v$ belongs to $X$ and $\hat{g}_{j}$ and $g_{ij}$ are not both true (line \ref{alg:skip_value_variables}) --- i.e. domains of variable $x_i$ and variable $v$ do not contain at least a value each that fits in $[b_j,b_{j+1})$ --- we move to the next combination of $i$ and $j$. Otherwise, we update domains similarly to the second loop (line \ref{alg:for_xij}) of Algorithm \ref{algo:propagator} while also updating stored booleans $g_{ij}$.

\begin{algorithm}
\setstretch{1}
\LinesNumbered
\SetAlgoLined\DontPrintSemicolon
  \SetKwFunction{propagate}{propagate}
  \SetKwFunction{updateDomainsBin}{updateDomainsBin}
  \SetKwFunction{updateDomainsValue}{updateDomainsValue}
  \SetKwProg{myalg}{Algorithm}{}{}
  \KwIn{Sets of value $X \equiv \{x_1,\ldots,x_n\}$ and bin counts $C \equiv \{c_1,\ldots,c_m\}$ decision variables; set of bin boundaries $B \equiv \{b_1,\ldots,b_{m+1}\}$; stored booleans $g_{ij}$}
  \myalg{\propagate{v}}{
  \updateDomainsBin{}\;
  \updateDomainsValue{v}\;
  }{}
  \setcounter{AlgoLine}{0}
  \SetKwProg{myproc}{Procedure}{}{}
  \myproc{\updateDomainsBin{}}{
  \For{$j \gets 1$ \textbf{to} $m$}{
  		$c^{\text{lb}}_j \gets \min c_j$ subject to Eq. (\ref{eq:1}),\ldots,(\ref{eq:3})\;
  		$c^{\text{ub}}_j \gets \max c_j$ subject to Eq. (\ref{eq:1}),\ldots,(\ref{eq:3})\;
  		\eIf{the linear program is infeasible} {
  			$\text{Dom}(c_j) \gets \emptyset$\;
  		}{
    			$\text{inf}(\text{Dom}(c_j)) \gets c^{\text{lb}}_j$\;
  			$\text{sup}(\text{Dom}(c_j)) \gets c^{\text{ub}}_j$\;
  		}
	}
  }
  \SetKw{Continue}{continue}
  \setcounter{AlgoLine}{0}  
  \SetKwProg{myproc}{Procedure}{}{}
  \myproc{\updateDomainsValue{v}}{
   \If{$v\in X$}{\label{alg:if_v_is_value}
  		Let $i$ be the index of $v$ in $X$\;
   		\For{$j \gets 1$ \textbf{to} $m$}{
  			$\hat{g}_{j}\gets g_{ij}$\;
  		}
  }
   \For{$i \gets 1$ \textbf{to} $n$ \textbf{and} $j \gets 1$ \textbf{to} $m$}{\label{alg:main_value_loop}
  	\If{$v\in X$ \textbf{and} $\neg(\hat{g}_{j}\wedge g_{ij})$}{\label{alg:skip_value_variables}
  		\Continue
	}
	
  	$f^{\text{lb}}_{ij} \gets \min f_{ij}$ subject to Eq. (\ref{eq:1}),\ldots,(\ref{eq:3})\;
  	$f^{\text{ub}}_{ij} \gets \max f_{ij}$ subject to Eq. (\ref{eq:1}),\ldots,(\ref{eq:3})\;
  	\uIf{the linear program is infeasible} {
  		$\text{Dom}(f_{ij}) \gets \emptyset$\;
	}
  	\uElseIf{$f^{\text{lb}}_{ij}=1$} {
  		\For{$l \gets 1$ \textbf{to} $m$, $l\neq j$}{
  			$\text{Dom}(x_i)= \text{Dom}(x_i) \setminus [b_l,b_{l+1})$\;
  			$g_{il}\gets \textbf{false}$\;
		}
	}
  	\uElseIf{$f^{\text{ub}}_{ij}=0$} {
  		$\text{Dom}(x_i)= \text{Dom}(x_i) \setminus [b_j,b_{j+1})$\;
  		$g_{ij}\gets \textbf{false}$\;	
	}
     }
  }
  \caption{\textsc{bincounts} GAC incremental propagator}
  \label{algo:incremental_propagator}
\end{algorithm}

In the context of an incremental propagator, instead of rebuilding the linear programs from scratch at every propagation round, one may store the current state of the constraint matrix and enforce additional constraints when some $f_{ij}$ are ground. Dual simplex can then be used to find a new feasible and optimal solution. However, in practice the overhead associated with rebuilding the linear programs during propagation is minimal.

{\bf Example.} For the example introduced in Section \ref{sec:decomposition} filtered domains are $\text{Dom}(x_1)=\{3,4\}$, $\text{Dom}(x_2)=\{1,2\}$, $\text{Dom}(x_3)=\{2\}$, $\text{Dom}(c_1)=\{2\}$, and $\text{Dom}(c_2)=\{1\}$. These domains are smaller than those obtained via the decomposition discussed in Section \ref{sec:decomposition}.

It is finally worth mentioning that the filtering algorithm described requires all variables $x_i$ to be assigned a value that can be mapped to one of the available bins. Alternatively, one may want to implement a semantics that simply disregards variables $x_i$ that are assigned a value that cannot be mapped to one of the available bins. This extension of the above propagator is relatively straightforward: one simply needs to add to the graph theoretical construct a ``hidden'' bin to which all values that cannot be mapped to any other bin will be mapped.

\section{Computational study}\label{sec:computational_study}

In this section we contrast performance of the decomposition presented in Section \ref{sec:decomposition} and of the GAC filtering presented in Section \ref{sec:gac}. All experiments in this paper are carried out on a 2.2GHz Intel Core i7 Macbook Air fitted with 8Gb of RAM. 

We implemented the decomposition (Dec) discussed in Section \ref{sec:decomposition} by using two systems featuring implementation of \textsc{global\_cardinality}: JaCoP\footnote{\url{http://jacop.osolpro.com/}} \cite{JaCoP:Guide}, which features a BC implementation based on \cite{citeulike:14184942}; and Choco 3.3.0\footnote{\url{http://www.choco-solver.org/}} \cite{choco}. We developed a \textsc{bin\_counts} global constraint in Choco implementing the GAC filtering in Section \ref{sec:gac}; the linear programming library used in the filtering algorithm is oj! Algorithms.\footnote{\url{http://ojalgo.org/}} 

We consider a set of 50 randomly generated instances. Each instance features $n=15$ variables $v_i$ and $m=9$ bins such that $b_i=5(i-1)$ for $i=1,\ldots,m+1$. $\text{Dom}(v_i)$ comprises up to 10 integer values uniformly distributed in$[0,60)$ --- if a value is generated more than once, the domain will contain less than 10 values. Recall that $c_j$ is a decision variable representing the count of bin $j$, $\text{Dom}(c_j)$ comprises values $\{0,\ldots,U_j\}$, where $U_j$ is a random integer uniformly distributed in $[0,n+1)$. We set as a goal the instantiation of a given fraction $f\in\{0,0.2,0.4,0.6,0.8\}$ of variables $v$, i.e.  $v_1,\ldots,v_{\lfloor n*f\rfloor}$ under a min domain/min value search strategy. All instances generated do admit a solution. In Fig. \ref{fig:filtering} we contrast filtering power for these two approaches. In Fig. \ref{fig:search_time} we contrast search time to achieve the goal. In both figures, we report the 5th and the 95th percentiles, as well as the mean. GAC filters more, but it is slower than the decomposition both in its basic and incremental variants. The decomposition in Choco features a stronger filtering compared to the one in JaCoP.

\begin{figure}
\centering
\resizebox{0.7\columnwidth}{!}{
\includegraphics{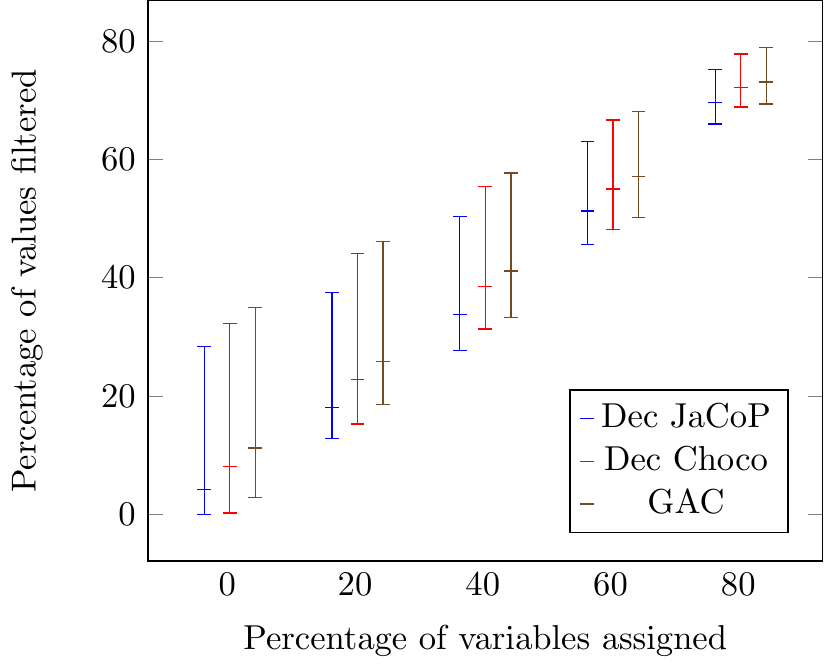}
}
\caption{Relative performance of the decomposition presented in Section \ref{sec:decomposition} and of the GAC propagator presented in Section \ref{sec:gac} in terms of filtering power.}
\label{fig:filtering}
\end{figure}

\begin{figure}
\centering
\resizebox{1\columnwidth}{!}{
\includegraphics{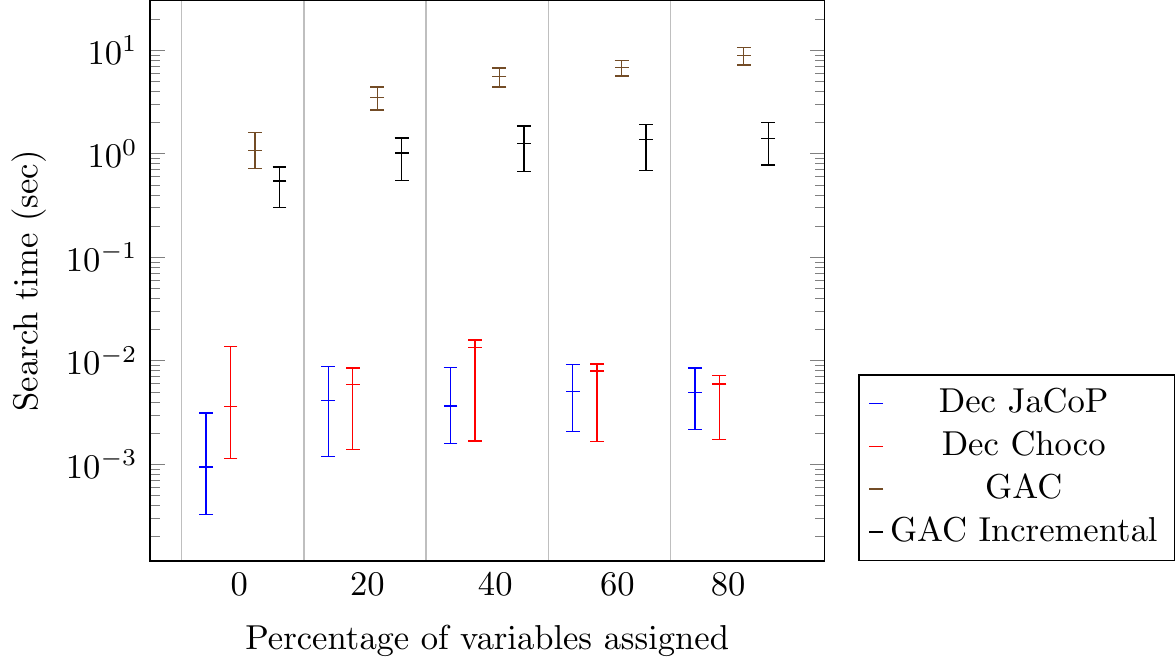}
}
\caption{Relative performance of the decomposition (Dec) presented in Section \ref{sec:decomposition} and of the GAC (basic and incremental) propagator presented in Section \ref{sec:gac} in terms of search time (sec).}
\label{fig:search_time}
\end{figure}

\section{Applications}\label{sec:applications}

In this section we present a number of applications of \textsc{bin\_counts}. In our first application we introduce a new statistical constraint \cite{rossi2014} --- the $\chi^2$ test constraint --- and we develop a decomposition for it which relies on \textsc{bin\_counts}; in our second and third applications, we discuss variants of the Balanced Academic Curriculum Problem \cite{DBLP:journals/corr/cs-PL-0110007} and of the Balanced Nurse Workload Problem \cite{10.2307/822876} which makes use of this new statistical constraint; finally, in our fourth application, we employ a variant of the $\chi^2$ test constraint in the context of a well-known problem from the literature on confidence interval analysis.

Since all models presented feature a mix of continuous and real variables, we rely on the extensions discussed in \cite{tricks2013}, which makes it possible to model real variables and constraints within a Choco model and delegate associated reasoning during search to Ibex,\footnote{\url{http://www.ibex-lib.org/}} a library for constraint processing over real numbers. 

\subsection{Pearson's $\chi^2$ test statistical constraint}

Statistical constraints were originally discussed in \cite{rossi2014}. A statistical constraint is a constraint that embeds a parametric or a non-parametric statistical model and a statistical test with significance level\footnote{The significance level is the probability of rejecting a null hypothesis, given that it is true.} $\alpha$ that is used to determine which assignments satisfy the constraint. Recent applications include \cite{Pachet:2015:GNS:2832581.2832596,sony2016}.

Existing statistical constraints include the Student's $t$ test constraint and the Kolmogorov-Smirnov constraint, both of which rely on the associated  statistical tests. Although no implementation exists for the Student's $t$ test constraint, if one exploits Choco extensions for modeling real variables, a decomposition is immediate since the associated test statistic can be easily modelled as an algebraic expression and forced to be less or equal to a given critical value. 

A statistical constraint not yet discussed in the literature is the $\chi^2$ test constraint. This constraint relies on Pearson's $\chi^2$ test of goodness of fit \cite{doi:10.1080/14786440009463897}, which establishes whether an observed frequency distribution differs from a theoretical distribution. In the $\chi^2$ test statistical constraint
\[\chi^2\text{-test}^\alpha_{b_1,\ldots,b_{m+1}}(v_1,\ldots,v_n;c_1,\ldots,c_m;t_1,\ldots,t_m)\]
$v_i$ is a decision variable that represents a random variate; $c_j$ is a decision variable that represents the number of variables $v_i$ which take a value in $[b_j,b_{j+1})$; $t_j$ is a decision variable that represents the theoretical reference count for bin $[b_j,b_{j+1})$. An assignment satisfies $\chi^2$ test statistical constraint iif a $\chi^2$ test at significance level $\alpha$ fails to reject the null hypothesis that the observed difference between the theoretical reference counts and the observed counts arose by chance.

A decomposition for the $\chi^2$ test statistical constraint is shown in Fig. \ref{model:chi_square}: constraint (1) deals with the computation of the counts, constraint (2) restricts the $\chi^2$ test statistic.

For the $\chi^2$ test statistical constraint it is worth observing that an $\alpha$ close to one restricts the solution space to those assignments for which the observed difference between the theoretical reference counts and the observed counts is small. As $\alpha$ decreases, higher fluctuations from the theoretical reference counts will be tolerated.

\begin{figure}
\begin{center}
        \framebox{
        \begin{tabular}{ll}
        \mbox{\bf Constraints:}\\
        \multicolumn{2}{l}{
        ~~~~(1)~~$\textsc{bin\_counts}_{b_1,\ldots,b_{m+1}}(v_1,\ldots,v_n;c_1,\ldots,c_m)$}\\
        \multicolumn{2}{l}{
        ~~~~(2)~~$\sum_{i=1}^m (c_i - t_i)^2/t_i\leq F^{-1}_{\chi_{m-1}^2}(1-\alpha)$}\\  
        \\     
        \mbox{\bf Parameters:} \\
        ~~~~$b_1,\ldots,b_{m+1}$ 			&bin boundaries\\
        ~~~~$F^{-1}_{\chi_{m-1}^2}$ 			&inverse $\chi^2$ distribution with\\ 
        					   				&$m-1$ degrees of freedom\\
        ~~~~$\alpha$ 						&target significance for the $\chi^2$ test\\
        \\
        \mbox{\bf Decision variables:} \\
        ~~~~$v_1,\ldots,v_n$ 				&observed values\\
        ~~~~$c_1,\ldots,c_m$ 				&bin counts\\
        ~~~~$t_1,\ldots,t_m$ 				&target counts for each bin
        \end{tabular}
        }
    \caption{$\chi^2$ test statistical constraint decomposition}
    \label{model:chi_square}
\end{center}    
\end{figure}

{\bf Example.} We consider a problem with $n=24$ variables $v_i$ and $m=6$ bins such that $b_i=5(i-1)$ for $i=1,\ldots,m+1$. $\text{Dom}(v_i)$ comprises values $\{0,\ldots,U^x_i\}$, where $U^x_i$ is a random integer number uniformly distributed in $[0,30)$. $\text{Dom}(c_j)$ comprises values $\{0,\ldots,n\}$. We implemented the model and set as a goal the instantiation of variables $v_1,\ldots,v_n$. Theoretical reference counts $t_j$ for the $m$ bins are $t=\{2,4,10,4,2,2\}$. The bin counts for two solutions obtained for significance levels $\alpha\in\{0.95,0.99\}$ are shown in Fig. \ref{fig:bincounts_chi} and contrasted against theoretical reference counts. 

\begin{figure}
\centering
\includegraphics{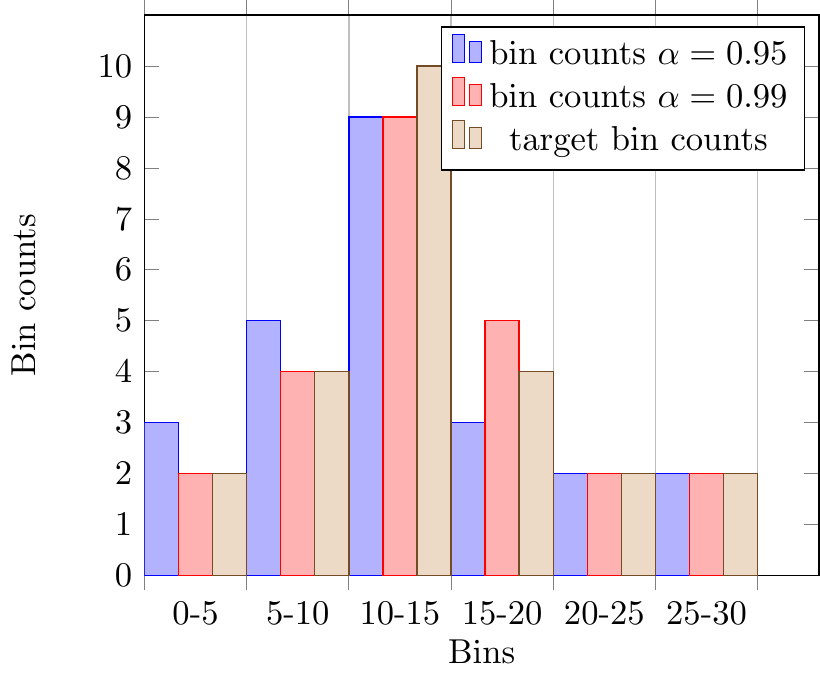}
\caption{Sample bincounts obtained in the context of our numerical example for different values of $\alpha$: when $\alpha=0.95$, $\chi^2= 1.10 \leq F^{-1}_{\chi_{m-1}^2}(1-\alpha)=1.14$; when $\alpha=0.99$, $\chi^2= 0.35 \leq F^{-1}_{\chi_{m-1}^2}(1-\alpha)=0.55$. 
}
\label{fig:bincounts_chi}
\end{figure}

\subsection{Balanced academic curriculum problem}

CSPLib \cite{citeulike:14181502} problem 30, the Balanced Academic Curriculum Problem (BACP)
\cite{DBLP:journals/corr/cs-PL-0110007}, asks to assign courses to semesters in such a way as to balance academic load --- the sum of credits from courses in a semester --- among semesters. In addition, there are constraints on minimum and maximum number of courses per semester, and some courses are prerequisite to others, e.g. course $B$ must be assigned to a semester following the one in which course $A$ is assigned. The objective that is optimised in order to achieve a balanced curriculum varies from work to work: the original formulation seeks to minimise the maximum load in any given semester, other formulations, see e.g. \cite{citeulike:14181514}, employ the $L_2$-deviation to measure balance. In this section we discuss a different strategy, based on 
the $\chi^2$ test statistic, to measure curriculum compliance with a target credit load distribution among semesters.

\begin{figure}[h!]
\begin{center}
        \framebox{
        \begin{tabular}{ll}
        \mbox{\bf Constraints:}\\
        \multicolumn{2}{l}{
        ~~~~(1)~~$\textsc{global\_cardinality}_{1,\ldots,S}(s;c)$}\\
        \multicolumn{2}{l}{
        ~~~~(2)~~$\textsc{bin\_packing}_w(s;l)$}\\      
        \multicolumn{2}{l}{
        ~~~~(3)~~$s_i<s_j$ (course prerequisites)}\\
        \multicolumn{2}{l}{
        ~~~~(4)~~$\textsc{bin\_counts}_{b_1,\ldots,b_{m+1}}(l;o)$}\\
        \multicolumn{2}{l}{        
        ~~~~(5)~~$\sum_{k=1}^m (o_k -t_k)^2/t_k\leq F^{-1}_{\chi_{m-1}^2}(1-\alpha)$}\\  
        \\
        \mbox{\bf Parameters:} \\
        ~~~~$b_1,\ldots,b_{m+1}$ 	&bin boundaries\\
        ~~~~$S$ 					&number of semesters\\
        ~~~~$w_i$ 				&course $i$ credits\\
        ~~~~$t_k$					&target occurrences in bin $k$\\    
        \\       
        \mbox{\bf Decision variables:} \\
        ~~~~$s_i$ 				&course $i$ semester\\
        ~~~~$l_j$ 					&semester $j$ load\\
        ~~~~$c_j$ 				&number of courses in semester $j$\\
        ~~~~$o_k$ 				&load occurrences in bin $k$
        \end{tabular}
        }
    \caption{A CP formulation for the BACP}
    \label{model:bacp}
\end{center}    
\end{figure}

As discussed in \cite{citeulike:14181514}, a CP formulation for the BACP (Fig. \ref{model:bacp}) includes one decision variable $s_i$ per course $i$, which indicates to which semester the course is assigned; one decision variables $l_j$ per semester $j$ recording its academic load; and one decision variable $c_j$ per semester recording the number of courses allocated to it. On these variables we enforce a \textsc{global\_cardinality} constraint (1) that links $s_i$ and $c_j$ to record the number of courses in each semester; a \textsc{bin\_packing} constraint (2) that links $s_i$ and $l_j$ to capture course credits in each semester; and binary inequalities (3) between pairs of $s_i$ variables to record course prerequisites.

In addition to the standard constructs above, our formulation includes a list of scalar bin boundaries $b_1,\ldots,b_k,\ldots,b_{m+1}$ and associated scalar target occurrences $t_k$, for $k=1,\ldots,m$, which are employed to capture the ideal semester load distribution we aim for. The load occurrences in bin $k$ are recorded by one decision variable $o_k$ per bin via a \textsc{bin\_counts} constraint (4) that links variables $l_j$ and $o_k$. Finally, the associated $\chi^2$ test statistic is forced to be less or equal to a relevant critical value (5). 

Our formulation leads to a constraint satisfaction and not to a constraint optimisation problem. Alternatively, one may decide to model target occurrences $t_k$ as decision variables and minimise or maximise a given measure associated with the load distribution, e.g. mean, variance, loss, etc; therefore our study can be seen as a generalisation of existing studies that investigated the optimisation of specific moments of a distribution.

The traditional smallest-domain-first (variable) and semester with the smallest academic load (value) criteria \cite{Schaus2009} is not suitable for our problem, since our goal is not do minimise the maximum load or the deviation from the mean load. Our aim is chiefly to demonstrate the effectiveness of GAC propagation, therefore we adopted a simple min domain/min value search strategy on variables $s_i$. We also implemented the symmetry breaking strategy in \cite{symcon2007}.

We considered the 28 instances in \cite{citeulike:14181502}. All instances feature 50 courses, 10 semesters, a min of 2 credits and a max of 100 credits per semester; a min of 2 courses and a max of 10 courses per semester. Let $L_{\text{ub}}$ be the load per period upper bound, we consider bin bounds $\{0,15,20,30,35,L_{\text{ub}}+1\}$ and associated target occurrences $\{1,2,4,2,1\}$; significance level $\alpha$ is set to 0.99.

{\bf Example.} We consider the instance ``bacp-1'' from the CSPLib testbed. A feasible schedule is shown in Fig. \ref{fig:bacp-1_plan}; semester loads are shown in Table \ref{tab:semester_load}.

\begin{figure}[h!]
\centering
\includegraphics[width=0.9\columnwidth]{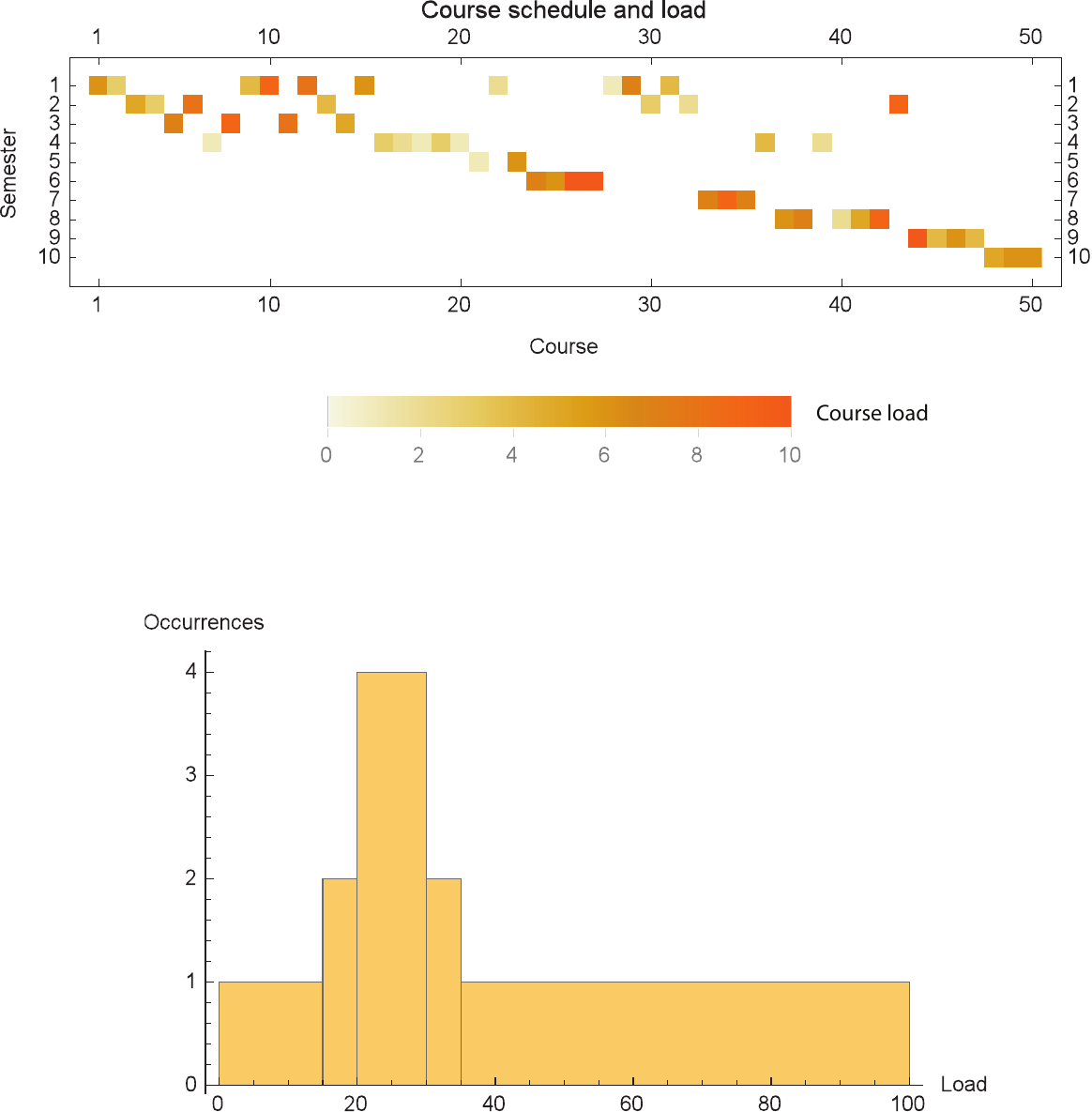}
\caption{Feasible course schedule for instance ``bacp-1'' from the CSPLib testbed.}
\label{fig:bacp-1_plan}
\end{figure}

\begin{table}[h!]
\centering
\begin{tabular}{l|rrrrrrrrrr}
Semester	&1&2&3&4&5&6&7&8&9&10\\
\hline
Load		&50& 34& 29& 17& 7& 33& 23& 29& 24& 17
\end{tabular}
\caption{Semester loads of the feasible course schedule (Fig. \ref{fig:bacp-1_plan}) for instance ``bacp-1'' from the CSPLib testbed.}
\label{tab:semester_load}
\end{table}

\begin{figure}[h!]
\centering
\resizebox{0.75\columnwidth}{!}{
\begin{tikzpicture}
	\begin{axis}[
		xmax=29,
		ymin=10e-4,
		xmajorgrids,
		legend style={at={(0.95,0.05)},anchor=south east},
	        axis y line*=left,
		ymode=log,
		xlabel=Number of instances solved,
		ylabel=Maximum solution time (s)]
	\addplot[color=red,mark=star] coordinates {
(	1	,	1.6974802	)
(	2	,	1.7501208	)
(	3	,	1.8016485	)
(	4	,	1.8497909	)
(	5	,	1.9436326	)
(	6	,	1.9554434	)
(	7	,	2.04248	)
(	8	,	2.2048786	)
(	9	,	2.2217205	)
(	10	,	2.2553117	)
(	11	,	2.2930627	)
(	12	,	2.426455	)
(	13	,	2.4863276	)
(	14	,	2.4880068	)
(	15	,	2.5291321	)
(	16	,	2.943285	)
(	17	,	4.1922665	)
(	18	,	4.2866864	)
(	19	,	4.71163	)
(	20	,	4.893437	)
(	21	,	4.987908	)
(	22	,	5.485529	)
(	23	,	11.128678	)
(	24	,	23.669256	)
(	25	,	49.035015	)
(	26	,	60.007404	)
(	27	,	60.011955	)
(	28	,	60.017715	)
	}; \label{plot_time_gac}
	\addlegendentry{Time (GAC)}	
	\addplot[color=blue,mark=star] coordinates {
(	1	,	0.001367503	)
(	2	,	0.043138888	)
(	3	,	0.46363023	)
(	4	,	0.5170676	)
(	5	,	21.399738	)
(	6	,	38.651035	)
(	7	,	53.42189	)
(	8		60.000015	)
(	9	,	60.00002	)
(	10	,	60.000023	)
(	11	,	60.000027	)
(	12	,	60.000034	)
(	13	,	60.000034	)
(	14	,	60.00004	)
(	15	,	60.00004	)
(	16	,	60.00004	)
(	17	,	60.000042	)
(	18	,	60.000042	)
(	19	,	60.000046	)
(	20	,	60.000053	)
(	21	,	60.00006	)
(	22	,	60.00006	)
(	23	,	60.00007	)
(	24	,	60.000088	)
(	25	,	60.000095	)
(	26	,	60.000107	)
(	27	,	60.000134	)
(	28	,	60.00016	)
	}; \label{plot_time_dec}
	\addlegendentry{Time (Dec)}		
	\end{axis}
	
	\begin{axis}[
		xmax=29,
		ymin=10e-4,
		legend style={at={(0.95,0.05)},anchor=south east},
	        axis y line*=right,
	        ylabel near ticks, 
	        yticklabel pos=right,
	        axis x line=none,
		ymode=log,
		ylabel=Nodes]
	\addlegendimage{/pgfplots/refstyle=plot_time_gac}\addlegendentry{Time (GAC)}
	\addlegendimage{/pgfplots/refstyle=plot_time_dec}\addlegendentry{Time (Dec)}
	\addplot[color=red,mark=diamond] coordinates {
(	1	,	23	)
(	2	,	24	)
(	3	,	24	)
(	4	,	25	)
(	5	,	26	)
(	6	,	26	)
(	7	,	27	)
(	8	,	29	)
(	9		30	)
(	10	,	30	)
(	11	,	31	)
(	12	,	32	)
(	13	,	33	)
(	14	,	35	)
(	15	,	43	)
(	16	,	47	)
(	17	,	48	)
(	18	,	48	)
(	19	,	49	)
(	20	,	50	)
(	21	,	53	)
(	22	,	282	)
(	23	,	330	)
(	24	,	752	)
(	25	,	2705	)
(	26	,	3516	)
(	27	,	5078	)
(	28	,	5166	)
	}; \label{plot_node_gac}
	\addlegendentry{Nodes (GAC)}	
	\addplot[color=blue,mark=diamond] coordinates {
(	1	,	26	)
(	2	,	1924	)
(	3	,	13026	)
(	4	,	13294	)
(	5	,	606914	)
(	6	,	1023393	)
(	7	,	1058771	)
(	8		1233711	)
(	9	,	1337324	)
(	10	,	1358165	)
(	11	,	1361137	)
(	12	,	1380860	)
(	13	,	1403910	)
(	14	,	1461458	)
(	15	,	1468006	)
(	16	,	1503413	)
(	17	,	1557695	)
(	18	,	1712113	)
(	19	,	1719878	)
(	20	,	1754997	)
(	21	,	1850144	)
(	22	,	1892450	)
(	23	,	1944231	)
(	24	,	1998084	)
(	25	,	2574795	)
(	26	,	2743705	)
(	27	,	2917854	)
(	28	,	3177022	)
	}; \label{plot_node_dec}
	\addlegendentry{Nodes (Dec)}
	\end{axis}
\end{tikzpicture}
}
\caption{Solution times for the BACP test bed}
\label{fig:bacp_time}
\end{figure}

Fig. \ref{fig:bacp_time} shows the maximum solution time and the maximum number of nodes observed for any given number of instances. Because of the high value of $\alpha$, for all instances solved the observed $\chi^2$ statistic is zero, i.e no deviation from target occurrences; in the next sections we will investigate cases in which a lower value of $\alpha$, and thus larger deviations, are both relevant and desirable. For the GAC model the average solution time is 11.6s; half of the instances are solved in less than 2.5s; the average number of nodes explored is 662; three instances (bacp-22,25,27) could not be solved within the given time limit of 60s.
The Dec model could only solve 7 instances within the given time limit; the average solution time is 49.1s; the average number of nodes explored is 1466725. GAC therefore brings orders of magnitude improvements in both nodes explored and search time.

\subsection{Balanced nursing workload problem}

We further investigate application of \textsc{bin\_count} in the context of CSPLib \cite{citeulike:14181502} problem 69, the Balanced Nursing Workload Problem (BNWP), which was originally introduced in \cite{10.2307/822876}. The aim in this problem is to design a balanced workload for nurses caring newborn patients requiring different amount of care ({\em acuity}). Patients belong to a zone and a nurse can only work in a single zone. There are lower and upper limits on the number of patient the nurse can handle and on the associated workload expressed in terms of total acuity.

The authors in \cite{10.2307/822876} proposed a Mixed Integer Programming approach. CP approaches based on the \textsc{spread} constraint were discussed in \cite{Schaus:2009:SLB:1560579.1560600,citeulike:14187845}, where the authors proposed a decomposition strategy that pre-computes the number of nurses for each zone and then solves each zone separately. An alternative CP approach based on the \textsc{dispersion} constraint was discussed in \cite{citeulike:14181514} and more recently in \cite{citeulike:14187844}, where nurse dependent workloads are modelled.

It is out of the scope of this section to provide a comprehensive discussion on the BNWP. Our key concern here is to provide an alternative reformulation for the problem based on the \textsc{bin\_count} constraint. While most previous works on this problem focused on minimizing the $L_2$-norm. We suggest an alternative approach in which the decision markers assigns a certain number of ``slots'' to each nurse and then sets an  ``ideal'' workload distribution over these slots. For instance, each nurse should be dealing with up to 6 patients of which ideally, 2 should have acuity in [0,30), 2 in [30,60), and 2 in [60,100). Fluctuations from this ideal patient distribution are accepted, but should be minimised for the nurse population.

While modeling our variant of the problem we deviate from the standard CP formulation discussed in \cite{citeulike:14181514}. More specifically, in our model (Fig. \ref{model:bnwp}) there are $N$ nurses, $P$ patients, and $S$ patient slots per nurse. The acuity of patient $p$ is $a_p$. The model 
features $S\cdot N$ decision variables $g^n_s$, each of which represents patient allocated to slot $s$ of nurse $n$. The acuity condition of $g^n_s$ is represented by decision variable $c^n_s$. Acuity occurrences in bin $k$ of nurse $n$ are represented by decision variable $o^n_k$. There are $b_1,\ldots,b_{m+1}$ acuity bin boundaries and, as mentioned, the decision maker must set a target number of patients in bin $k$ for a nurse. Note that it is possible to express nurse dependent target occurrence distributions; but to keep the discussion simple, we will here assume all nurses share the same target occurrence distribution. 
On these variables and parameters we enforce the following constraints: an \textsc{all\_different} constraint (2) on variables $g^n_s$, to ensure all patients are assigned to different nurses' slots --- note that to ensure that the number of patients is a multiple of $S\cdot N$ it is possible to insert a number of dummy patients with zero acuity; an \textsc{element} constraint (3) to ensure that variable $c^n_s$ represents the acuity of patient $g^n_s$; a \textsc{bin\_counts} constraint (4) for each nurse to relate variables variable $c^n_s$ with acuity occurrences $o^n_k$; and a linear inequality (5) that ensures the normalised deviation from target occurrences (effectively a $\chi^2$ statistic) remains below $K$ for all nurses. $K$ is finally minimised in the objective (1).

\begin{figure}[h!]
\begin{center}
        \framebox{
        \begin{tabular}{ll}
	\mbox{\bf Objective function:}\\
	\multicolumn{2}{l}{
	~~~~(1)~~$\min$ $K$}\\
	\\
        \mbox{\bf Constraints:}\\
        \multicolumn{2}{l}{
        ~~~~(2)~~$\textsc{all\_different}(g)$}\\
        \multicolumn{2}{l}{
        ~~~~(3)~~$\textsc{element}(c^n_s,a,g^n_s)$ for all $n$ and $s$}\\
        \multicolumn{2}{l}{
        ~~~~(4)~~$\textsc{bin\_counts}_{b_1,\ldots,b_{m+1}}(c^n;o)$ for all $n$}\\
        \multicolumn{2}{l}{        
        ~~~~(5)~~$\sum_{k=1}^m (o^n_k -t_k)^2/t_k\leq K$ for all $n$}\\ 
        \\
        \mbox{\bf Parameters:} \\
        ~~~~$N$ 					&number of nurses, index $n$\\
        ~~~~$P$ 					&number of patients, index $p$\\   
        ~~~~$S$ 					&number of patient slots per nurse, index $s$\\
        ~~~~$a_p$ 			 	&acuity of patient $p$\\     
        ~~~~$b_1,\ldots,b_{m+1}$ 	&acuity bin boundaries\\        
        ~~~~$t_k$					&target \# patients in bin $k$ for a nurse\\    
        \\       
        \mbox{\bf Decision variables:} \\
        ~~~~$g^n_s$ 				&patient allocated to slot $s$ of nurse $n$\\
        ~~~~$c^n_s$ 				&acuity condition of $g^n_s$\\
        ~~~~$o^n_k$ 				&acuity occurrences in bin $k$ of nurse $n$
        \end{tabular}
        }
    \caption{A CP formulation for the BNWP}
    \label{model:bnwp}
\end{center}    
\end{figure}

By using the model introduced, we solved slightly modified versions of the test instances originally proposed in \cite{Schaus:2009:SLB:1560579.1560600} and available on CSPLib. More specifically, we did adopt a decomposition approach and we thus solved each zone separately. Most importantly, we set the number of slot per nurse to 6 and adopted the previously mentioned ``ideal'' workload distribution, which assigns for each nurse 2 patients to each of the 3 bins identified by bin boundaries $b_1=0,b_2=30,b_3=60,b_4=100$. We precomputed the number of nurses needed for patients in any given zone, i.e. $N=\lceil P/S\rceil$, and we extended the list of patients to include $P'$ patients, with $Z=P'-P$ zero-acuity patients to ensure that $P'=N\cdot S$. We ignored the min/max number of patients per nurse as well as the maximum workload, since these instance parameters become irrelevant once an ``ideal'' workload distribution is defined for the nurses.

{\bf Example.} We consider instance ``2zones0'' available on CSPLib. In this instance, we have two zones and 11 nurses. To keep the example simple, we decompose the problem and focus solely on zone 1. There are $P=17$ patients with acuities \[a=\{59,57,50,44,42,40,39,39,33,33,32,27,26,22,20,17,11\}.\] We assume each nurse should care of $S=6$ patients in total therefore we can cover the zone by using $N=3$ nurses in total; a dummy patients (\#18) with zero acuity must then be added to ensure that $P'=N\cdot S$. The allocation plan that minimises $K$ is  shown in Table \ref{tab:allocation_2zones0}; the observed patient acuity occurrence distributions are shown in Table \ref{tab:occurrences_distribution_2zones0}. The minimised maximum $\chi^2$ statistic value of the optimal plan is 4. It is clear that balancing allocations in such a way as to minimise fluctuations from an ideal workload distribution is not a trivial task even for an instance as small as this one.

\begin{table}
\centering
\resizebox{\columnwidth}{!}{
\begin{tabular}{lrrrrrr|rrrrrr|rrrrrr}
		&\multicolumn{6}{c}{Nurse 1}	&\multicolumn{6}{c}{Nurse 2}	&\multicolumn{6}{c}{Nurse 3}\\
s		&1	&2	&3	&4	&5	&6	&1	&2	&3	&4	&5	&6	&1	&2	&3	&4	&5	&6\\
\hline
$g^n_s$	&1	&4	&7	&11	&13	&18	&2	&5	&8	&12	&14	&17	&3	&6	&9	&10	&15	&16\\
$c^n_s$	&59	&44	&39	&32	&26	&0	&57	&42	&39	&27	&22	&11	&50	&40	&33	&33	&20	&17\\
\end{tabular}
}
\caption{Optimal nurse-patient allocation for instance ``2zones0''}
\label{tab:allocation_2zones0}
\end{table}
 
\begin{table}
\centering
\resizebox{0.5\columnwidth}{!}{
\begin{tabular}{l|ccc|c}
\multicolumn{1}{c}{}&\multicolumn{3}{c}{Bins}\\
 		&[0,30)	&[30,60)	&[60,100)	&$K$ ($\chi^2$ statistic)\\
\hline		
 Nurse 1	&2		&4		&0		&4\\
 Nurse 2	&3		&3		&0		&3\\
 Nurse 3	&2		&4		&0		&4\\
 \hline
 Target	&2		&2		&2		
 \end{tabular}
 }
\caption{Observed patient acuity occurrence distributions for instance ``2zones0''}
\label{tab:occurrences_distribution_2zones0}
\end{table}

Once more, we adopted a simple min domain/min value search strategy in which our goal is instantiation of variables $g^n_s$. We also implemented a naive symmetry breaking strategy that forces $g^n_s<g^n_{s+1}$ for all $s$ and $n$; and also $g^n_1<g^{n+1}_1$ for all $n$. Results of our computational study are shown in Fig. \ref{fig:bacp_time}, which as before illustrates the maximum solution time and maximum number of nodes observed for any given number of instances. Because of the zone-based decomposition we adopted, we eventually solved a total of 91 instances. Similarly to what we have done for the BACP, we here investigate differences between the GAC and the decomposition approach for the \textsc{bin\_count}. Surprisingly, the decomposition approach is generally one order of magnitude faster for this problem. However, by observing the number of nodes explored, it is clear that the GAC approach still leads to more effective pruning.

\begin{figure}[h!]
\centering
\resizebox{0.75\columnwidth}{!}{
\begin{tikzpicture}
	\begin{axis}[
		xmax=92,
		ymin=10e-4,
		xmajorgrids,
		legend style={at={(0.95,0.05)},anchor=south east},
	        axis y line*=left,
		ymode=log,
		xlabel=Number of instances solved,
		ylabel=Maximum solution time (s)]
	\addplot[color=red] coordinates {
(	1	,	0.14069042	)
(	2	,	0.25747246	)
(	3	,	0.264466	)
(	4	,	0.28751147	)
(	5	,	0.32148415	)
(	6	,	0.3545526	)
(	7	,	0.3637679	)
(	8	,	0.3682336	)
(	9	,	0.46844783	)
(	10	,	0.48150668	)
(	11	,	0.4901053	)
(	12	,	0.49075693	)
(	13	,	0.49337795	)
(	14	,	0.5029566	)
(	15	,	0.5040281	)
(	16	,	0.5082681	)
(	17	,	0.5203837	)
(	18	,	0.5427194	)
(	19	,	0.57397497	)
(	20	,	0.5776166	)
(	21	,	0.5827136	)
(	22	,	0.5847328	)
(	23	,	0.585159	)
(	24	,	0.5876289	)
(	25	,	0.59604913	)
(	26	,	0.6054454	)
(	27	,	0.6092299	)
(	28	,	0.6158583	)
(	29	,	0.63722014	)
(	30	,	0.6443466	)
(	31	,	0.6464129	)
(	32	,	0.66126376	)
(	33	,	0.67217815	)
(	34	,	0.70805156	)
(	35	,	0.72102463	)
(	36	,	0.7297309	)
(	37	,	0.7536947	)
(	38	,	0.76037395	)
(	39	,	0.78157246	)
(	40	,	0.78817284	)
(	41	,	0.7884552	)
(	42	,	0.8024213	)
(	43	,	0.8166299	)
(	44	,	0.83384395	)
(	45	,	0.8418302	)
(	46	,	0.8538491	)
(	47	,	0.87472117	)
(	48	,	0.9593871	)
(	49	,	0.9618546	)
(	50	,	0.9697458	)
(	51	,	0.9801059	)
(	52	,	0.9833705	)
(	53	,	1.1533707	)
(	54	,	1.2482382	)
(	55	,	1.487984	)
(	56	,	1.4918275	)
(	57	,	1.6118771	)
(	58	,	1.6199079	)
(	59	,	1.9362205	)
(	60	,	2.2554204	)
(	61	,	2.5889864	)
(	62	,	3.4141178	)
(	63	,	4.763428	)
(	64	,	9.853514	)
(	65	,	12.151078	)
(	66	,	12.955053	)
(	67	,	15.976505	)
(	68	,	16.20936	)
(	69	,	16.299015	)
(	70	,	16.762503	)
(	71	,	19.426325	)
(	72	,	20.996134	)
(	73	,	21.924784	)
(	74	,	22.336847	)
(	75	,	23.610006	)
(	76	,	25.168224	)
(	77	,	28.738623	)
(	78	,	31.637419	)
(	79	,	60.001686	)
(	80	,	60.002625	)
(	81	,	60.002953	)
(	82	,	60.004078	)
(	83	,	60.00419	)
(	84	,	60.00524	)
(	85	,	60.009342	)
(	86	,	60.010857	)
(	87	,	60.012787	)
(	88	,	60.01366	)
(	89	,	60.01467	)
(	90	,	60.019066	)
(	91	,	60.03091	)
	}; \label{plot_time_gac}
	\addlegendentry{Time (GAC)}	
	\addplot[color=blue] coordinates {
(	1	,	0.020495992	)
(	2	,	0.021264177	)
(	3	,	0.021461053	)
(	4	,	0.02320561	)
(	5	,	0.023870775	)
(	6	,	0.023952588	)
(	7	,	0.024153652	)
(	8	,	0.027841993	)
(	9	,	0.028290095	)
(	10	,	0.03214114	)
(	11	,	0.035818093	)
(	12	,	0.035994593	)
(	13	,	0.036788262	)
(	14	,	0.037781112	)
(	15	,	0.038160287	)
(	16	,	0.03832796	)
(	17	,	0.038331226	)
(	18	,	0.038586672	)
(	19	,	0.039180845	)
(	20	,	0.039750792	)
(	21	,	0.04004505	)
(	22	,	0.040667716	)
(	23	,	0.04256006	)
(	24	,	0.044192825	)
(	25	,	0.04429172	)
(	26	,	0.04518634	)
(	27	,	0.046110544	)
(	28	,	0.04723242	)
(	29	,	0.048311964	)
(	30	,	0.0487951	)
(	31	,	0.049141966	)
(	32	,	0.05480948	)
(	33	,	0.05521231	)
(	34	,	0.055430178	)
(	35	,	0.059427623	)
(	36	,	0.059889473	)
(	37	,	0.061354533	)
(	38	,	0.06145344	)
(	39	,	0.06474019	)
(	40	,	0.06605123	)
(	41	,	0.06838946	)
(	42	,	0.06909571	)
(	43	,	0.07150108	)
(	44	,	0.07388619	)
(	45	,	0.07455308	)
(	46	,	0.075107686	)
(	47	,	0.076913625	)
(	48	,	0.0805693	)
(	49	,	0.0831426	)
(	50	,	0.083744936	)
(	51	,	0.0889181	)
(	52	,	0.09225724	)
(	53	,	0.09946675	)
(	54	,	0.14269117	)
(	55	,	0.14568517	)
(	56	,	0.15023641	)
(	57	,	0.15680176	)
(	58	,	0.18873954	)
(	59	,	0.1909599	)
(	60	,	0.19755954	)
(	61	,	0.20187283	)
(	62	,	0.21334337	)
(	63	,	0.21686019	)
(	64	,	0.21884938	)
(	65	,	0.2285962	)
(	66	,	0.23711847	)
(	67	,	0.25321808	)
(	68	,	0.26692557	)
(	69	,	0.27559167	)
(	70	,	0.30155697	)
(	71	,	0.3060844	)
(	72	,	0.30831328	)
(	73	,	0.32152835	)
(	74	,	0.33442804	)
(	75	,	0.55197716	)
(	76	,	1.0278946	)
(	77	,	1.0444493	)
(	78	,	1.0816026	)
(	79	,	1.1401501	)
(	80	,	1.1447217	)
(	81	,	1.189675	)
(	82	,	1.2052017	)
(	83	,	1.2089581	)
(	84	,	1.2529713	)
(	85	,	1.271595	)
(	86	,	1.3039724	)
(	87	,	1.3445101	)
(	88	,	1.3690073	)
(	89	,	2.1153033	)
(	90	,	2.8706532	)
(	91	,	36.155743	)
	}; \label{plot_time_dec}
	\addlegendentry{Time (Dec)}		
	\end{axis}
	
	\begin{axis}[
		xmax=91,
		ymin=10,
		ymax=10e5,
		%y=0.5cm/3,
		legend style={at={(0.95,0.05)},anchor=south east},
	        axis y line*=right,
	        ylabel near ticks, 
	        yticklabel pos=right,
	        axis x line=none,
		ymode=log,
		ylabel=Nodes]
	\addlegendimage{/pgfplots/refstyle=plot_time_gac}\addlegendentry{Time (GAC)}
	\addlegendimage{/pgfplots/refstyle=plot_time_dec}\addlegendentry{Time (Dec)}
	\addplot[color=red,dashed] coordinates {
(	1	,	833	)
(	2	,	833	)
(	3	,	834	)
(	4	,	834	)
(	5	,	834	)
(	6	,	834	)
(	7	,	842	)
(	8	,	864	)
(	9	,	864	)
(	10	,	864	)
(	11	,	864	)
(	12	,	888	)
(	13	,	888	)
(	14	,	888	)
(	15	,	1437	)
(	16	,	1437	)
(	17	,	1437	)
(	18	,	1437	)
(	19	,	1437	)
(	20	,	1437	)
(	21	,	1437	)
(	22	,	1437	)
(	23	,	1437	)
(	24	,	1437	)
(	25	,	1437	)
(	26	,	1437	)
(	27	,	1437	)
(	28	,	1437	)
(	29	,	1438	)
(	30	,	1445	)
(	31	,	1445	)
(	32	,	1445	)
(	33	,	1445	)
(	34	,	1445	)
(	35	,	1445	)
(	36	,	1445	)
(	37	,	1445	)
(	38	,	1445	)
(	39	,	1445	)
(	40	,	1445	)
(	41	,	1445	)
(	42	,	1637	)
(	43	,	1637	)
(	44	,	1637	)
(	45	,	1637	)
(	46	,	1637	)
(	47	,	1637	)
(	48	,	1637	)
(	49	,	1637	)
(	50	,	1637	)
(	51	,	1637	)
(	52	,	1637	)
(	53	,	1637	)
(	54	,	1637	)
(	55	,	1637	)
(	56	,	1637	)
(	57	,	1637	)
(	58	,	1642	)
(	59	,	1642	)
(	60	,	1682	)
(	61	,	1724	)
(	62	,	1724	)
(	63	,	2756	)
(	64	,	3431	)
(	65	,	3447	)
(	66	,	3456	)
(	67	,	3466	)
(	68	,	3469	)
(	69	,	4254	)
(	70	,	4254	)
(	71	,	4254	)
(	72	,	4254	)
(	73	,	4254	)
(	74	,	4254	)
(	75	,	4254	)
(	76	,	4254	)
(	77	,	5668	)
(	78	,	5866	)
(	79	,	5948	)
(	80	,	6135	)
(	81	,	6207	)
(	82	,	6211	)
(	83	,	6578	)
(	84	,	6748	)
(	85	,	6909	)
(	86	,	6948	)
(	87	,	7081	)
(	88	,	7210	)
(	89	,	7313	)
(	90	,	7313	)
(	91	,	12228	)
	}; \label{plot_node_gac}
	\addlegendentry{Nodes (GAC)}	
	\addplot[color=blue,dashed] coordinates {
(	1	,	833	)
(	2	,	833	)
(	3	,	834	)
(	4	,	834	)
(	5	,	834	)
(	6	,	834	)
(	7	,	842	)
(	8	,	864	)
(	9	,	864	)
(	10	,	864	)
(	11	,	864	)
(	12	,	888	)
(	13	,	888	)
(	14	,	888	)
(	15	,	1441	)
(	16	,	1441	)
(	17	,	1441	)
(	18	,	1444	)
(	19	,	1444	)
(	20	,	1444	)
(	21	,	1445	)
(	22	,	1445	)
(	23	,	1445	)
(	24	,	1445	)
(	25	,	1445	)
(	26	,	1445	)
(	27	,	1445	)
(	28	,	1445	)
(	29	,	1637	)
(	30	,	1637	)
(	31	,	1637	)
(	32	,	1637	)
(	33	,	1637	)
(	34	,	1637	)
(	35	,	1637	)
(	36	,	1637	)
(	37	,	1637	)
(	38	,	1637	)
(	39	,	1637	)
(	40	,	1637	)
(	41	,	1637	)
(	42	,	1637	)
(	43	,	1637	)
(	44	,	1637	)
(	45	,	1641	)
(	46	,	1642	)
(	47	,	1642	)
(	48	,	1724	)
(	49	,	1724	)
(	50	,	2360	)
(	51	,	2360	)
(	52	,	2480	)
(	53	,	2483	)
(	54	,	2483	)
(	55	,	2483	)
(	56	,	2715	)
(	57	,	2715	)
(	58	,	2715	)
(	59	,	2715	)
(	60	,	2715	)
(	61	,	2715	)
(	62	,	3189	)
(	63	,	7237	)
(	64	,	7237	)
(	65	,	7237	)
(	66	,	7237	)
(	67	,	7237	)
(	68	,	8569	)
(	69	,	8569	)
(	70	,	11014	)
(	71	,	11014	)
(	72	,	11014	)
(	73	,	11014	)
(	74	,	11014	)
(	75	,	11014	)
(	76	,	23444	)
(	77	,	23444	)
(	78	,	27584	)
(	79	,	27604	)
(	80	,	27604	)
(	81	,	27604	)
(	82	,	27604	)
(	83	,	27725	)
(	84	,	27741	)
(	85	,	27741	)
(	86	,	27741	)
(	87	,	27741	)
(	88	,	27741	)
(	89	,	27741	)
(	90	,	133662	)
(	91	,	626190	)
	}; \label{plot_node_dec}
	\addlegendentry{Nodes (Dec)}
	\end{axis}
\end{tikzpicture}
}
\caption{Solution times for the BNWP test bed}
\label{fig:bacp_time}
\end{figure}

\subsection{Determining confidence intervals for the multinomial distribution}

In the previous two sections the $\chi^2$ test statistic has been essentially employed as as a least square measure of discrepancy from a desired distributional form. In the context of these applications, $\alpha$ therefore represents a constraint ``softening'' coefficient, rather than a significance level. In this section, we introduce an application of a variant of the $\chi^2$ test statistical constraint in the context of the well-known problem of determining simultaneous confidence intervals for the multinomial distribution. In the context of this application, $\alpha$ retains its original nature of statistical significance level.

In statistics a multivariate generalisation of the $\chi^2$ test is the so-called ``score test,'' which can be used to carry out hypothesis testing on multivariate distributions (see \cite{citeulike:14214028}, chap. 5). In this section we shall concentrate on the multinomial distribution (see \cite{citeulike:14214023}, section 3). Consider a multinomial distribution with event probabilities $p_1,\ldots,p_k$, where $k$ is the number of categories, and $N$ trials. Let $x_1,\ldots,x_n$ be $n$ i.i.d. random variates and $c_1,\ldots,c_k$ be associated observed cell counts in a sample size of $N=\sum_{i=1}^k c_i$. The problem of determining simultaneous confidence intervals for $p_1,\ldots,p_k$ was developed in the Sixties by \cite{citeulike:14214031,10.2307/1266673,citeulike:14214026}.

The maximum likelihood estimators of $p_j$ are $\hat{p}_j=c_j/N$, $j=1,\ldots,k$. The random vector $\hat{p}\equiv(\hat{p}_1,\ldots,\hat{p}_k)$ is asymptotically distributed according to a multivariate normal distribution with mean vector $p\equiv(p_1,\ldots,p_k)$ and covariance matrix $\Sigma/N$ with elements $\sigma_{jj}=p_j(1-p_j)$ and $\sigma_{ij}=-p_ip_j$ for $i\neq j$. In what follows, we shall concentrate on the work of \cite{citeulike:14214026}, who discuss confidence intervals based on the quadratic form
\[N(\hat{p}-p)'\Sigma^{-1}(\hat{p}-p)\]
which is asymptotically distributed as a $\chi^2$ distribution with $k-1$ degrees of freedom. Let $1-\alpha$ be the desired confidence level, confidence intervals are obtained, for $j=1,\ldots,k$, as the two solutions of equation
\[N\frac{(\hat{p}_j-p_j)^2}{p_j(1-p_j)}=F^{-1}_{\chi^2_{k-1}}(1-\alpha)\]

The very same intervals can be easily computed via a simple variant of the model originally presented in Fig. \ref{model:chi_square}, in which Pearson's $\chi^2$ statistic is replaced by Quesenberry and Hurst's statistic. The revised model is shown in Fig. \ref{model:score_test}; constraint (3) ensures this model computes confidence interval lower bounds. 

\begin{figure}
\begin{center}
        \framebox{
        \begin{tabular}{ll}
        \mbox{\bf Constraints:}\\
        \multicolumn{2}{l}{
        ~~~~(1)~~$\textsc{bin\_counts}_{1,\ldots,k+1}(x_1,\ldots,x_n;c_1,\ldots,c_k)$}\\
        \multicolumn{2}{l}{
        ~~~~(2)~~$N\frac{(c_j/N-p_j)^2}{p_j(1-p_j)}= F^{-1}_{\chi_{k-1}^2}(1-\alpha)$\hspace{2em}$j=1,\ldots,k$}\\  
        \multicolumn{2}{l}{
        ~~~~(3)~~$p_j\leq c_j/N$\hspace{10em}$j=1,\ldots,k$}\\ 
        \\     
        \mbox{\bf Parameters:} \\
        ~~~~$F^{-1}_{\chi_{k-1}^2}$ 			&inverse $\chi^2$ distribution with\\ 
        					   				&$k-1$ degrees of freedom\\
        ~~~~$\alpha$ 						&target significance for the score test\\
        \\
        \mbox{\bf Decision variables:} \\
        ~~~~$x_1,\ldots,x_n$ 				&random variates\\
        ~~~~$c_1,\ldots,c_k$ 				&observed cell counts\\
        ~~~~$p_1,\ldots,p_k$ 				&lower bounds for multinomial event probabilities
        \end{tabular}
        }
    \caption{A score test statistical constraint decomposition to compute lower bounds of Quesenberry and Hurst's confidence intervals. To compute the respective confidence interval upper bounds constraint (3) should be replaced by $p_j\geq c_j/N$.}
    \label{model:score_test}
\end{center}    
\end{figure}

In \cite{10.2307/1266673} the authors discussed tighter version of Quesenberry and Hurst's intervals. This and other variants such as the one in \cite{citeulike:14214031} can be modelled by simply modifying the original statistic in constraint (2).

{\bf Example.} We consider $N=10$ i.i.d. observations drawn from a multinomial with event probability vector $p=\{0.3,0.3,0.4\}$. The ten observations are $x=\{1, 1, 2, 0, 1, 1, 1, 0, 2\}$, the associated cell counts are $c=\{3,5,2\}$. We set the target significance for the score test $\alpha=0.1$ (i.e. a confidence level $1-\alpha=0.9$); in Table \ref{tab:confidence_intervals} we compare confidence intervals obtained using Quesenberry and Hurst's closed form expressions and intervals obtained as solutions of our model (Fig. \ref{model:score_test}) based on the score test statistical constraint. 

In the example here presented the ten observations and the associated cell counts were scalar values. However, the constraint program in Fig. \ref{model:score_test} models random variates and bin counts as decision variables. This opens up opportunities for declarative modeling with applications in multiple domains \cite{rossi2014,citeulike:13771650,Pachet:2015:GNS:2832581.2832596,sony2016}.

\begin{table}[t]
\centering
\begin{tabular}{l|rr}
								&Quesenberry and Hurst's					&Score test decomposition\\
\hline							
$(p^{\text{lb}}_1,p^{\text{ub}}_1)$		&(0.0981, 0.6280)	&(0.0981,0.6280)\\
$(p^{\text{lb}}_2,p^{\text{ub}}_2)$		&(0.2192, 0.7808)	&(0.2192,0.7808)\\
$(p^{\text{lb}}_3,p^{\text{ub}}_3)$		&(0.0509, 0.5383)	&(0.0509,0.5383)
\end{tabular}
\caption{Confidence intervals for our numerical example}
\label{tab:confidence_intervals}
\end{table}

\section{Related works}\label{sec:related_works}

Constraint categories that are related to \textsc{bin\_counts} include counting constraints and value constraints. Given a set of decision variables, the \textsc{count} constraint can be exploited to constrain the number of variables which take a given scalar value; \textsc{among} \cite{citeulike:14183355} constrains the number of decision variables which take values contained within a given set of scalar values; essentially this constraint can be seen as a \textsc{bin\_counts} over a single bin. 
\textsc{interval\_and\_count} \cite{Cousin93} and \textsc{assign\_and\_counts} \cite{Beldiceanu:2007:GCC:1232658.1232664} deal with the allocation of tasks to bin. However, in both cases the semantics involve properties of the task --- such as being assigned a given colour --- and determines a single common bound on the number of item that are allocated to a bins, rather than counting, for each bin, the number of elements allocated to it. The 
\textsc{cumulative} constraint \cite{Aggoun:1993:ECO:2262436.2262990} enforces that at each point in time, the cumulated height of the set of tasks that overlap that point does not exceed a given limit; the \textsc{interval\_and\_sum} constraint, derived from the previous one, fixes the origins of a collection of tasks in such a way that, for all the tasks that are allocated to the same interval, the sum of the heights does not exceed a given capacity. In both these constraints all intervals have the same size and, once more, these constraints does not count, for each interval, the number of elements allocated to it, they enforce instead a common capacity limit.

The $\textsc{global\_cardinality}_{v_1,\ldots,v_K}$($x_1,\ldots,x_n;c_1,\ldots,c_m$) constraint \cite{charme89} requires that, for each $j=1,\ldots,m$, decision variable $c_j$ is equal to the number of variables $x_1,\ldots,x_n$ that are assigned scalar $v_j$. The \textsc{bin\_counts} constraint represents a generalisation of $\textsc{global\_cardinality}$ in which scalar values $v_1,\ldots,v_n$ are replaced by intervals representing bins. A GAC algorithm for the \textsc{global\_cardinality} constraint, which builds upon and generalises the results in \cite{REGIN:1994:FAC:2891730.2891786}, was discussed in \cite{Regin:1996:GAC:1892875.1892906}. Our GAC approach for \textsc{bin\_counts} generalises this latter discussion, since the approach in \cite{Regin:1996:GAC:1892875.1892906} does not reduce the domains of the count variables $c_j$.

Finally, \textsc{bin\_counts} can be used to express generalisations of constraints such as \textsc{spread} \cite{citeulike:13171963}
and \textsc{deviation} \cite{citeulike:14181517}. These generalisations can be used to model aspects of a distribution that go beyond moments such as mean and standard deviation. As we have shown with the $\chi^2$ test constraint, this paves the way to a range of applications in the context of statistical constraints.

\section{Conclusions}\label{sec:conclusions}

We discussed the \textsc{bin\_counts} constraint, which deals with the problem of counting the number of decision variables in a set which are assigned values that lie in given bins. We presented a decomposition and a GAC propagation strategy, as well as a decomposition for a new statistical constraint --- the $\chi^2$ test constraint --- based on \textsc{bin\_counts}. We discuss three applications of the $\chi^2$ test constraint: reformulations for the BACP and the BNWP, as well as an application in confidence interval analysis. In our computational study we illustrate the enhanced filtering achieved by our GAC propagation strategy over a constraint decomposition in the context of a set of randomly generated instances. This enhanced filtering led to order of magnitude improvements observed for search performance --- both in terms of computational time and number of nodes explored --- in the context of an existing CSPLib test bed for the BACP. A decomposition based on the \textsc{global\_cardinality} constraint led to superior performance in terms of computational time in the context of an existing CSPLib test bed for the BNWP; however, also in this case, a GAC propagation strategy led to stronger filtering. Finally, we presented an application of the $\chi^2$ test constraint in the context of a well-known problem from the confidence interval analysis literature: the problem of determining simultaneous confidence intervals for the multinomial distribution. Although this problem is well-known in the literature, to the best of our knowledge a declarative approach based on statistical constraints has never been presented before.

\bibliography{bin_counts}

\begin{thebibliography}{42}
\providecommand{\natexlab}[1]{#1}
\providecommand{\url}[1]{\texttt{#1}}
\expandafter\ifx\csname urlstyle\endcsname\relax
  \providecommand{\doi}[1]{doi: #1}\else
  \providecommand{\doi}{doi: \begingroup \urlstyle{rm}\Url}\fi

\bibitem[Aggoun and Beldiceanu(1993)]{Aggoun:1993:ECO:2262436.2262990}
A.~Aggoun and N.~Beldiceanu.
\newblock Extending chip in order to solve complex scheduling and placement
  problems.
\newblock \emph{Math. Comput. Model.}, 17\penalty0 (7):\penalty0 57--73, April
  1993.
\newblock ISSN 0895-7177.

\bibitem[Ahuja et~al.(1994)Ahuja, Orlin, Stein, and Tarjan]{citeulike:14221207}
Ravindra~K. Ahuja, James~B. Orlin, Clifford Stein, and Robert~E. Tarjan.
\newblock Improved algorithms for bipartite network flow.
\newblock \emph{SIAM Journal on Computing}, 23\penalty0 (5):\penalty0 906--933,
  October 1994.

\bibitem[Beldiceanu and Contejean(1994)]{citeulike:14183355}
N.~Beldiceanu and E.~Contejean.
\newblock Introducing global constraints in {CHIP}.
\newblock \emph{Mathematical and Computer Modelling}, 20\penalty0
  (12):\penalty0 97--123, December 1994.
\newblock ISSN 08957177.

\bibitem[Beldiceanu et~al.(2007)Beldiceanu, Carlsson, Demassey, and
  Petit]{Beldiceanu:2007:GCC:1232658.1232664}
N.~Beldiceanu, M.~Carlsson, S.~Demassey, and T.~Petit.
\newblock Global constraint catalogue: Past, present and future.
\newblock \emph{Constraints}, 12\penalty0 (1):\penalty0 21--62, March 2007.
\newblock ISSN 1383-7133.

\bibitem[C.~Mullinax(2002)]{10.2307/822876}
M.~Lawley C.~Mullinax.
\newblock Assigning patients to nurses in neonatal intensive care.
\newblock \emph{The Journal of the Operational Research Society}, 53\penalty0
  (1):\penalty0 25--35, 2002.

\bibitem[Castro and Manzano(2001)]{DBLP:journals/corr/cs-PL-0110007}
C.~Castro and S.~Manzano.
\newblock Variable and value ordering when solving balanced academic curriculum
  problems.
\newblock \emph{CoRR}, cs.PL/0110007, 2001.
\newblock URL \url{http://arxiv.org/abs/cs.PL/0110007}.

\bibitem[Choi et~al.(2006)Choi, Harvey, Lee, and Stuckey]{Choi06}
C.W. Choi, W.~Harvey, J.H.M. Lee, and P.J. Stuckey.
\newblock Finite domain bounds consistency revisited.
\newblock In A.~Sattar and B.~Kang, editors, \emph{AI 2006: Advances in
  Artificial Intelligence}, volume 4304 of \emph{LNCS}, pages 49--58. Springer,
  2006.
\newblock ISBN 978-3-540-49787-5.

\bibitem[Cousin(1993)]{Cousin93}
X.~Cousin.
\newblock \emph{Application of Constraint Logic Programming on Timetable
  Problem}.
\newblock PhD thesis, Rennes I University, France, 1993.

\bibitem[Fages et~al.(2013)Fages, Chabert, and Prud'Homme]{tricks2013}
J.-G. Fages, G.~Chabert, and C.~Prud'Homme.
\newblock Combining finite and continuous solvers.
\newblock In \emph{Proceedings of the 4th international workshop on Techniques
  foR Implementing Constraint programming Systems (TRICS)}, 2013.

\bibitem[Freuder(1982)]{Freuder:1982:SCB:322290.322292}
E.~C. Freuder.
\newblock A sufficient condition for backtrack-free search.
\newblock \emph{J. ACM}, 29\penalty0 (1):\penalty0 24--32, January 1982.
\newblock ISSN 0004-5411.

\bibitem[Gent and Walsh(1999)]{citeulike:14181502}
I.~P. Gent and T.~Walsh.
\newblock {CSPlib}: A benchmark library for constraints.
\newblock In J.~Jaffar, editor, \emph{Principles and Practice of Constraint
  Programming - CP'99}, volume 1713 of \emph{Lecture Notes in Computer
  Science}, pages 480--481. Springer Berlin Heidelberg, 1999.

\bibitem[Gold(1963)]{citeulike:14214031}
R.~Z. Gold.
\newblock Tests auxiliary to $\chi^2$ tests in a markov chain.
\newblock \emph{The Annals of Mathematical Statistics}, 34\penalty0
  (1):\penalty0 56--74, March 1963.

\bibitem[Goodman(1965)]{10.2307/1266673}
L.~A. Goodman.
\newblock On simultaneous confidence intervals for multinomial proportions.
\newblock \emph{Technometrics}, 7\penalty0 (2):\penalty0 247--254, 1965.

\bibitem[Heller and Tompkins(1956)]{Heller:Tompkins:1956}
I.~Heller and C.~B. Tompkins.
\newblock \emph{An extension of a theorem of Dantzig's}, pages 247--254.
\newblock Princeton University Press, 1956.

\bibitem[Katriel and Thiel(2003)]{citeulike:14184942}
I.~Katriel and S.~Thiel.
\newblock Fast bound consistency for the global cardinality constraint.
\newblock In Francesca Rossi, editor, \emph{Principles and Practice of
  Constraint Programming - CP 2003}, volume 2833 of \emph{Lecture Notes in
  Computer Science}, pages 437--451. Springer Berlin Heidelberg, 2003.

\bibitem[Khachiyan(1980)]{citeulike:14183483}
L.~G. Khachiyan.
\newblock Polynomial algorithms in linear programming.
\newblock \emph{USSR Computational Mathematics and Mathematical Physics},
  20\penalty0 (1):\penalty0 53--72, January 1980.

\bibitem[Kuchcinski and Szymanek(2016)]{JaCoP:Guide}
K.~Kuchcinski and R.~Szymanek.
\newblock {JaCoP} {L}ibrary. {U}ser's {G}uide.
\newblock \url{http://www.jacop.eu }, 2016.

\bibitem[Mackworth(1977)]{citeulike:6598311}
A.~K. Mackworth.
\newblock Consistency in networks of relations.
\newblock \emph{Artificial Intelligence}, 8\penalty0 (1):\penalty0 99--118,
  February 1977.

\bibitem[Miller(1981)]{citeulike:14214028}
R.~G. Miller.
\newblock \emph{Simultaneous Statistical Inference}.
\newblock Springer New York, New York, NY, 1981.

\bibitem[Monette et~al.(2007)Monette, Schaus, Zampelli, Deville, and
  Dupont]{symcon2007}
J.-N. Monette, P.~Schaus, S.~Zampelli, Y.~Deville, and P.~Dupont.
\newblock A cp approach to the balanced academic curriculum problem.
\newblock In \emph{Proceedings of the Seventh International Workshop on
  Symmetry and Constraint Satisfaction Problems (Symcon{\textquoteright}07)},
  2007.

\bibitem[Oplobedu et~al.(1989)Oplobedu, Marcovitch, and Tourbier]{charme89}
A.~Oplobedu, J.~Marcovitch, and Y.~Tourbier.
\newblock Charme: Un langage industriel de programmation par contraintes,
  illustr\'e par une application chez renault.
\newblock In \emph{Proceedings of the Ninth International Workshop on Expert
  Systems and their Applications: General Conference}, pages 55--70, 1989.

\bibitem[Pachet et~al.(2015)Pachet, Roy, Papadopoulos, and
  Sakellariou]{Pachet:2015:GNS:2832581.2832596}
F.~Pachet, P.~Roy, A.~Papadopoulos, and J.~Sakellariou.
\newblock Generating 1/f noise sequences as constraint satisfaction: The voss
  constraint.
\newblock In \emph{Proceedings of the 24th International Conference on
  Artificial Intelligence}, IJCAI'15, pages 2482--2488. AAAI Press, 2015.
\newblock ISBN 978-1-57735-738-4.

\bibitem[Pearson(1900)]{doi:10.1080/14786440009463897}
K.~Pearson.
\newblock On the criterion that a given system of deviations from the probable
  in the case of a correlated system of variables is such that it can be
  reasonably supposed to have arisen from random sampling.
\newblock \emph{Philosophical Magazine Series 5}, 50\penalty0 (302):\penalty0
  157--175, 1900.

\bibitem[Pesant(2015)]{citeulike:14181514}
G.~Pesant.
\newblock Achieving domain consistency and counting solutions for dispersion
  constraints.
\newblock \emph{INFORMS Journal on Computing}, 27\penalty0 (4):\penalty0
  690--703, November 2015.

\bibitem[Pesant(2016)]{citeulike:14187844}
G.~Pesant.
\newblock Balancing nursing workload by constraint programming.
\newblock In C.-G. Quimper, editor, \emph{Integration of AI and OR Techniques
  in Constraint Programming: 13th International Conference, CPAIOR 2016, Banff,
  AB, Canada, May 29 - June 1, 2016, Proceedings}, pages 294--302. Springer
  International Publishing, 2016.

\bibitem[Pesant and R\'{e}gin(2005)]{citeulike:13171963}
G.~Pesant and J.-C. R\'{e}gin.
\newblock {SPREAD}: A balancing constraint based on statistics.
\newblock In P.~van Beek, editor, \emph{Principles and Practice of Constraint
  Programming - CP 2005}, volume 3709 of \emph{Lecture Notes in Computer
  Science}, pages 460--474. Springer Berlin Heidelberg, 2005.

\bibitem[Prestwich et~al.(2015)Prestwich, Rossi, and Tarim]{citeulike:13771650}
S.~D. Prestwich, R.~Rossi, and S.~A. Tarim.
\newblock Randomness as a constraint.
\newblock In Gilles Pesant, editor, \emph{Principles and Practice of Constraint
  Programming}, volume 9255 of \emph{Lecture Notes in Computer Science}, pages
  351--366. Springer International Publishing, 2015.

\bibitem[Prud'homme et~al.(2016)Prud'homme, Fages, and Lorca]{choco}
Charles Prud'homme, Jean-Guillaume Fages, and Xavier Lorca.
\newblock \emph{Choco Documentation}.
\newblock TASC, INRIA Rennes, LINA CNRS UMR 6241, COSLING S.A.S., 2016.
\newblock URL \url{http://www.choco-solver.org}.

\bibitem[Quesenberry and Hurst(1964)]{citeulike:14214026}
C.~P. Quesenberry and D.~C. Hurst.
\newblock Large sample simultaneous confidence intervals for multinomial
  proportions.
\newblock \emph{Technometrics}, 6\penalty0 (2):\penalty0 191--195, May 1964.

\bibitem[R\'{e}gin(1994)]{REGIN:1994:FAC:2891730.2891786}
J.-C. R\'{e}gin.
\newblock A filtering algorithm for constraints of difference in csps.
\newblock In \emph{Proceedings of the Twelfth AAAI National Conference on
  Artificial Intelligence}, AAAI'94, pages 362--367. AAAI Press, 1994.

\bibitem[R{\'e}gin(1996)]{Regin:1996:GAC:1892875.1892906}
J.-C. R{\'e}gin.
\newblock Generalized arc consistency for global cardinality constraint.
\newblock In \emph{Proceedings of the Thirteenth National Conference on
  Artificial Intelligence - Volume 1}, AAAI'96, pages 209--215. AAAI Press,
  1996.
\newblock ISBN 0-262-51091-X.

\bibitem[R{\'e}gin(2003)]{reg03}
J.-C R{\'e}gin.
\newblock \emph{{Global Constraints and Filtering Algorithms}}.
\newblock in Constraints and Integer Programming Combined, Kluwer, M. Milano
  editor, 2003.

\bibitem[Rivaud et~al.(2016)Rivaud, Pachet, and Roy]{sony2016}
S.~Rivaud, F.~Pachet, and P.~Roy.
\newblock Sampling markov models under binary equality constraints is hard.
\newblock Technical report, Sony CSL, Paris, 2016.

\bibitem[Rossi et~al.(2006)Rossi, van Beek, and Walsh]{1207782}
F.~Rossi, P.~van Beek, and T.~Walsh.
\newblock \emph{Handbook of Constraint Programming}.
\newblock Elsevier Science Inc., New York, NY, USA, 2006.
\newblock ISBN 0444527265.

\bibitem[Rossi et~al.(2014)Rossi, Prestwich, and Tarim]{rossi2014}
R.~Rossi, S.~Prestwich, and S.~A. Tarim.
\newblock Statistical constraints.
\newblock In \emph{Proceedings of the 21st biennial European Conference on
  Artificial Intelligence, ECAI 2014}, volume 263 of \emph{Frontiers in
  Artificial Intelligence and Applications}, pages 777--782. IOS Press, 2014.

\bibitem[Schaus(2009)]{Schaus2009}
P.~Schaus.
\newblock \emph{Solving balancing and bin-packing problems with constraint
  programming}.
\newblock PhD thesis, Universite catholiqu\'e de Louvaine, Belgium, 2009.

\bibitem[Schaus and R\'{e}gin(2014)]{citeulike:14187845}
P.~Schaus and J.-C. R\'{e}gin.
\newblock Bound-consistent spread constraint.
\newblock \emph{EURO Journal on Computational Optimization}, 2\penalty0
  (3):\penalty0 123--146, 2014.

\bibitem[Schaus et~al.(2007)Schaus, Deville, Dupont, and
  R\'{e}gin]{citeulike:14181517}
P.~Schaus, Y.~Deville, P.~Dupont, and J.-C. R\'{e}gin.
\newblock The deviation constraint.
\newblock In P.~Van~Hentenryck and L.~Wolsey, editors, \emph{Integration of AI
  and OR Techniques in Constraint Programming for Combinatorial Optimization
  Problems}, volume 4510 of \emph{Lecture Notes in Computer Science}, pages
  260--274. Springer Berlin Heidelberg, 2007.

\bibitem[Schaus et~al.(2009)Schaus, van Hentenryck, and
  R\'{e}gin]{Schaus:2009:SLB:1560579.1560600}
P.~Schaus, P.~van Hentenryck, and J.-C. R\'{e}gin.
\newblock Scalable load balancing in nurse to patient assignment problems.
\newblock In \emph{Proceedings of the 6th International Conference on
  Integration of AI and OR Techniques in Constraint Programming for
  Combinatorial Optimization Problems}, CPAIOR '09, pages 248--262, Berlin,
  Heidelberg, 2009. Springer-Verlag.
\newblock ISBN 978-3-642-01928-9.

\bibitem[Schrijver(2003)]{citeulike:507409}
A.~Schrijver.
\newblock \emph{Combinatorial Optimization (3 volume, {A,B}, \& C)}.
\newblock Springer, 1 edition, February 2003.

\bibitem[Truemper(1992)]{citeulike:14224552}
K.~Truemper.
\newblock \emph{Matroid Decomposition}.
\newblock Academic Press Inc, 1992.
\newblock ISBN 0127012257.

\bibitem[Wang(2008)]{citeulike:14214023}
H.~Wang.
\newblock Exact confidence coefficients of simultaneous confidence intervals
  for multinomial proportions.
\newblock \emph{Journal of Multivariate Analysis}, 99\penalty0 (5):\penalty0
  896--911, May 2008.

\end{thebibliography}

\end{document}